\newcommand{\minisection}[1]{\vspace{2mm}\noindent{\textbf{#1}}}
\theoremstyle{plain}
\newtheorem{theorem}{Theorem}[section]
\newtheorem{proposition}[theorem]{Proposition}
\theoremstyle{definition}
\newtheorem{definition}[theorem]{Definition}
\theoremstyle{remark}
\title{Flow Matching in Latent Space}
\author{%
  Quan Dao \thanks{equal contribution} \\
  VinAI Research\\
  \texttt{v.quandm7@vinai.io} \\
  % examples of more authors
  \And
  Hao Phung \footnotemark[1] \\
  VinAI Research\\
  \texttt{v.haopt12@vinai.io} \\
  \And
  Binh Nguyen \\
  Department of Mathematics, \\
  National University of Singapore\\
    \texttt{tuanbinhs@gmail.com}
  \AND
  Anh Tran \\
  VinAI Research\\
  \texttt{v.anhtt152@vinai.io} \\
}
\begin{document}
\maketitle

\begin{abstract}
Flow matching is a recent framework to train generative models that exhibits impressive empirical performance while being relatively easier to train compared with diffusion-based models.
 Despite its advantageous properties, prior methods still face the challenges of expensive computing and a large number of function evaluations of off-the-shelf solvers in the pixel space. Furthermore, although latent-based generative methods have shown great success in recent years, this particular model type remains underexplored in this area. In this work, we propose to apply flow matching in the latent spaces of pretrained autoencoders, which offers improved computational efficiency and scalability for high-resolution image synthesis. This enables flow-matching training on constrained computational resources while maintaining their quality and flexibility.
 Additionally, our work stands as a pioneering contribution in the integration of various conditions into flow matching for conditional generation tasks, including label-conditioned image generation, image inpainting, and semantic-to-image generation. Through extensive experiments, our approach demonstrates its effectiveness in both quantitative and qualitative results on various datasets, such as CelebA-HQ, FFHQ, LSUN Church \& Bedroom, and ImageNet. We also provide a theoretical control of the Wasserstein-2 distance between the reconstructed latent flow distribution and true data distribution, showing it is upper-bounded by the latent flow matching objective. Our code will be available at \href{https://github.com/VinAIResearch/LFM.git}{https://github.com/VinAIResearch/LFM.git}.
\end{abstract}

\section{Introduction}
Generative AI is one of the most active fields of AI research in recent years, with a rapid growth of technologies and a blooming of applications. In the context of computer vision, Generative AI not only captures data distribution of a target domain, allowing producing novel data without or with conditional inputs such as text, sketches, semantic maps, or images, but it also brings in various downstream applications such as image restoration, image translation, image segmentation, and more.
While GANs paved the way for high-fidelity image generation \cite{brock2018large}, diffusion models \cite{dhariwal2021diffusion,huang2021variational,kingma2021variational,song2021maximum,xiao2021tackling} have taken the lead with better mode coverage and excellent support on different conditional inputs.

However, diffusion models are not yet the perfect since it requires a long sampling time. A line of methods \cite{dockhorn2022genie,jolicoeur2021gotta,lu2022dpm,song2021denoising,zhang2023fast} is dedicated to improving sampling efficiency to address their slow training and testing speed. Despite some successes with these methods, diffusion models still suffer from other issues related to slow convergence and sub-optimal probability paths, which can hinder the performance of existing methods on large-scale datasets. Besides fixing the flaws of diffusion models, another promising research direction is to design a more efficient AI generation technique that inherits their advantages but does not suffer from mentioned problems.

Flow matching framework \cite{albergo2022building,lipman2023flow,liu2023flow}, a newly introduced line of research, show positive initial results and promise to become a better alternative for diffusion models. 
The essence of this method lies in learning an ordinary differential equation that follows a path from the source distribution to the target distribution. It has been pointed out that ODE-based generative models have favorable properties compared to SDE-based ones in diffusion models, most notably easier to train and faster to sample, due to lower curvature in the generative trajectories \cite{lee2023minimizing,liu2023flow}. Particularly, in \cite{lipman2023flow}, flow matching models beat the standard diffusion method DDPM on unconditional image generation tasks in terms of negative log-likelihood, sample quality, and inference speed.
Despite its initial promising results, flow matching (FM) research is still in the infancy stage. In this paper, we aim to boost its development by adapting the key successful features of diffusion models into these systems. 

First, although state-of-the-art diffusion models flourish via executing the diffusion process in the latent space, all FM models still perform path matching in pixel space. We propose to apply flow matching in the latent spaces of pretrained autoencoders, which offers improved computational efficiency and scalability for high-resolution image synthesis.

Second, one of the key successes of diffusion models is the versatile support of different conditional input types. Prior FM models are developed only for unconditional tasks. We redesign the velocity field network to support conditional inputs, such as class labels, segmentation masks, or images, and demonstrate this ability via different downstream tasks. To our understanding, our work are also the first to integrate classifier-free guidance in the sampling process of flow matching models.

Finally, we demonstrate that the Wasserstein-2 distance between the reconstructed latent flow distribution and true data distribution is upper-bounded by the latent flow matching objective. Additionally, our bound emphasizes the importance of choosing good backbone autoencoder architectures.

In summary, our paper offers the following contributions:

\begin{itemize}
\item To the best of our knowledge, our work is the first to introduce flow matching in latent space for high-resolution datasets, resulting in faster training time and improved computational efficiency.
\item Our work is also the first to integrate conditional inputs into flow matching models and support class conditional image generation with classifier-free velocity field guidance. We demonstrate this ability via various downstream applications, including label-conditioned image generation, image inpainting, and mask-to-image generation.
\item We provide a theoretical analysis (Wasserstein bound) on the tradeoff of performing flow matching of latent representations.
\item Extensive experiments on various tasks demonstrate our method's superiority in quantitative and qualitative results, narrowing the gap with state-of-the-art diffusion methods. Our approach is expected to facilitate the development of flow matching compared to other generative models in real-world applications.
  
\end{itemize}

\section{Related works}
\minisection{Diffusion models} \cite{sohl2015deep} is an emergent type of generative framework that achieves unprecedented performance in multiple fields, including image generation \cite{ho2020denoising, nichol2021improved, song2021denoising, dhariwal2021diffusion, Peebles2022DiT}, video generation \cite{ho2022video,ho2022imagen}, 3D generation \cite{luo2021diffusion}, and language generation \cite{dieleman2023language,li2022diffusion}. This type of model can be viewed as a score-based model \cite{song2019generative, song2020improved, song2020score, vincent2011connection}, as it relies on Stochastic Differential Equations (SDEs) to find trajectories from a tractable distribution (such as simple Gaussian noise) to the data distribution.
However, these models typically require a significant number of function evaluations, ranging from hundreds to thousands, to generate a sample. This process can be slow and computationally inefficient for real-world applications. Hence, recent research has focused on improving the sampling efficiency by either modifying the diffusion process \cite{song2021denoising, zhang2023fast, karras2022elucidating,song2023consistency} or using generative adversarial training \cite{xiao2021tackling,phung2023wavediff}. Another direction is to have instead shifted the focus to apply diffusion generative model in latent space to reduce the compute cost, benefiting from the compact dimension of latent representations learned by an autoencoder  \cite{rombach2022high, vahdat2021scorebased, sinha2021d2c}. Still, diffusion models have some drawbacks related to training convergence and sub-optimal trajectories, which might adversely affect their training time and overall performance.  Motivating from \cite{rombach2022high, vahdat2021scorebased, sinha2021d2c}, our work is also performed in the latent space of a pre-trained autoencoder. To the best of our knowledge, we are the first to leverage latent representation for flow-matching, specifically targeting high-resolution image generation.

\minisection{Continuous normalizing flows (CNFs)} \cite{chen2018neural,grathwohl2018ffjord}, unlike SDE-based methods, learn a deterministic mapping from the source distribution to the target distribution via neural Ordinary Differential Equation.
However, previous framework for the training of CNFs are computational prohibitive and hard to scale up due to the need of solving ODEs at each iteration.
As such, no existing methods are on par with the SDE-based models in the literature in terms of performance.
Inspired by the per-sample optimization approach of score-based diffusion models, flow matching (FM) \cite{albergo2022building,lipman2023flow,liu2023flow} was recently introduced as a simulation-free technique for training CNFs.
It offers the utilization of flexible probability paths to efficiently training CNFs that transport noise samples into data samples.
Moreover, based on optimal transport theory, \cite{lipman2023flow,liu2023flow} argues that using training a constant velocity field result in simpler training objective, compared to the high-curvature probability path in diffusion models.
The attractive properties of FM lead to a current trend in extending this method to more task-specific applications \cite{chen2023riemannian,li2023self,wu2022fast} and faster sampling \cite{lee2023minimizing,pooladian2023multisample,zheng2023improved}.

However, a main drawback of current works on FM is that they are not yet ready for high-resolution image synthesis.

Our approach represents one of the pioneering attempts to thoroughly integrate and study latent representations for flow-matching models, with the aim of enhancing both scalability and performance.

\minisection{Class-conditional generation} has attained significant advancements in recent years, primarily driven by the success of diffusion models \cite{dhariwal2021diffusion, ho2022cdm,zheng2023eds}.
ADM \cite{dhariwal2021diffusion}, for instance, utilizes the gradient of a pre-trained classifier w.r.t. the input sample to update the denoising process. Notably, \cite{ho2022classifier} introduced a simple yet elegant technique known as classifier-free guidance. This approach effectively balances the trade-off between the quality and diversity of generated samples without requiring a pretrained classifier. Besides, such a technique has made significant contributions to the advancement of conditional generation in various domains, such as text-to-image generation \cite{ramesh2022hierarchical,saharia2022photorealistic} and text-to-video generation \cite{ho2022imagen,singer2022make}. However, the application of flow-based models in class-conditional generation has not yet been explored. 

To tackle this gap, we incorporate classifier-free guidance from \cite{ho2022classifier} for latent flow matching, which plays a pivotal role in enhancing the performance of conditional models. However, it is important to note that while we utilize the same technique, it differs from traditional generative frameworks in terms of the target being estimated. In flow matching, the sampling process is driven by velocity rather than noise, resulting in a distinct approach to conditional generation.

\section{Background}

Given access to empirical observations of data distribution $\bx_0 \sim p_0$
and noise distribution $\bx_1 \sim p_1$ (which usually is Gaussian), the goal
of flow matching framework is to estimate a coupling $\pi(p_0, p_1)$ that
describes the evolution between those two distributions.
This objective could be formulated as solving an ordinary differential equation:
\begin{align}
  \label{eq:ode-lagrangian}
  \mathrm{d}\bx_t = v(\bx_t, t) \mathrm{d}t,
\end{align}
on time $t \in [0,1]$, where the velocity $v:\bbR^d \times [0, 1] \to \bbR^d$
is set to drive the flow from $p_0$ to $p_1$.

We can parameterize the drift (velocity) by $v_{\theta}(x_t, t)$ with an expressive learner (a neural net, for example) and estimate $\theta$ by solving a simple least square regression problem:
\begin{equation}
  \label{eq:fm-obj}  
  \hat{\theta} = \argmin_{\theta} \bbE_{t, \bx_t} \left[\norm{v(\bx_t, t) - v_{\theta}(\bx_t, t) }^2_2 \right].
\end{equation}
With this estimation, we can do backward sampling by taking
$\hat{\bx}_0 = \int_1^0 v_{\theta}(\bx_t, t) \mathrm{d}t$
since we have access to the noise $\bx_1 \sim p_1$, and solve the integration with numerical integrators.
Formally, the ODE in \eqref{eq:ode-lagrangian} is called the Lagrangian flow, which describes the dynamic of point clouds.
We have an equivalent view in Eulerian sense -- a continuity equation that describes the dynamic of the measure $p_t$ \cite{ambrosio2005gradient}:
\begin{equation}
  \label{eq:ode-eulerian}
  \partial_t p_t  = -\mathrm{div}(v(\bx, t) p_t),
\end{equation}
where \texttt{div} is the divergence operator.

Equation \Cref{eq:fm-obj} allows for the inclusion of different options for $v(\bx, t)$, which underscores the flexibility nature of the flow matching framework.
In the following, we provide a brief overview of two widely-used variations of $v(\bx, t)$: the probability flow ODE and the constant velocity ODE.

\paragraph{Probability flow ODE}
Introduced in \cite{song2020score}, the velocity in probability flow ODE has the form
\begin{equation}
  \label{eq:velocity-probaflow}
  v_t(\bx_t, t) = f(\bx_t, t) - \frac{g_t^2}{2} \nabla \log p_t,
\end{equation}
where $\nabla \log p_t$ is the score function, $f(\bx_t, t)$ and $g_t$ are the drift and diffusion coefficients of the equivalent form of the forward generative diffusion process, a stochastic differential equation (SDE)
\begin{equation}
  \label{eq:generative-sde}
  \mathrm{d}\bx_t = f(\bx_t, t)\mathrm{d}t + g_t\mathrm{d}\bw
\end{equation}
with $\bw$ the standard Wiener process (or Brownian motion).
The flow matching loss then can be recasted as score matching loss \cite{hyvarinen2005estimation} to estimate the score $\nabla \log p_t$, as suggested in \cite{lee2023minimizing,zheng2023improved}.
A common choice of the path $\bx_t$ is the Variance Preserving (VP) path:
\begin{equation}
  \label{eq:vp-path}
  \bx_t \egaldef \alpha_t \bx_0 + (1-\alpha_t^2)^{\frac{1}{2}} \ \bx_1 , \text{where } \alpha_t = e^{-\frac{1}{2}\int_0^{t} \beta(s)\mathrm{d}s}.
\end{equation}
%
% \binh{to check back the VP ODE formulation}
%
It is shown in \cite{karras2022elucidating} that using ODE sampler of the form \eqref{eq:velocity-probaflow} with path \eqref{eq:vp-path} can reduce sampling costs compared to sampling with discretization of the diffusion SDE in Equation~\eqref{eq:generative-sde}.

\paragraph{Constant velocity ODE}
Liu et al. \cite{liu2023flow} pointed out that using a non-linear interpolation for $\bx_t$ in the VP probability flow ODE in \eqref{eq:vp-path} can result in an unnecessary increase in the curvature of generative trajectories.
As a consequence, this can lead to reduced efficiency in the training and sampling of generative trajectories.
Instead, they advocate the use of constant velocity ODE, where the path $\bx_t \egaldef (1 - t)\bx_0 + t\bx_1$ is the linear interpolation between $\bx_0$ and $\bx_1$.
This means the velocity drives the flow following the direction $v_t = \bx_1 - \bx_0$, and the flow matching loss in \Cref{eq:fm-obj} becomes:
\begin{equation}
  \label{eq:fm-obj-constant}  
  \hat{\theta} = \argmin_{\theta} \bbE_{t, \bx_t} \left[\norm{\bx_1 - \bx_0 - v_\theta(\bx_t, t) }^2_2 \right].
\end{equation}
We thus develop our method using the same framework: incorporating the use of linear interpolation for training and the ODE presented in Equation~\eqref{eq:ode-lagrangian} for sampling.

\section{Methodology}

\begin{figure}[!ht]
    \centering
    \includegraphics[width=\linewidth]{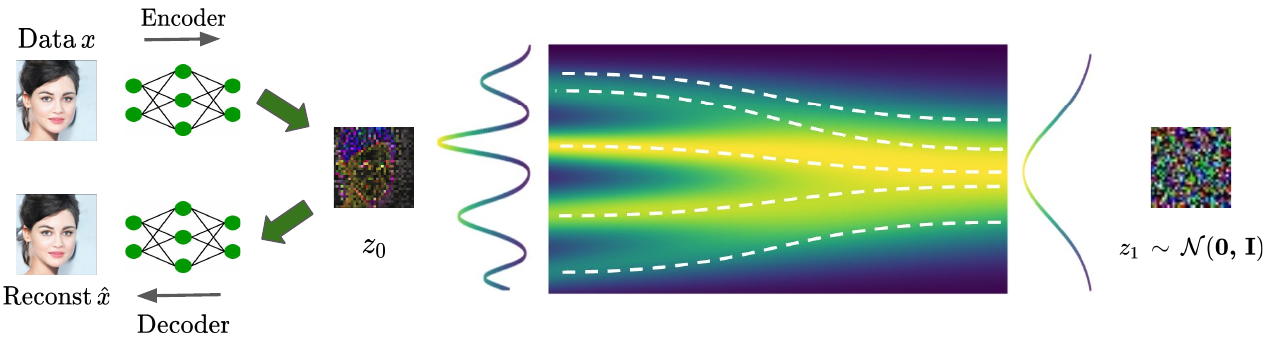}
    % \vspace{-1mm}
    \caption{In our Latent Flow Matching framework's training procedure, the input data $\bx$ is encoded to produce the latent representation $\bz_0$. A latent flow network is then trained to predict the velocity of the transformation from a standard normal distribution $p(\bz_1) = \cN(0, \bI)$ to the target latent distribution $p(\bz_0)$. During the sampling process, random noise $\bz_1$ is drawn from $p(\bz_1)$, and the trained latent network is used to predict the velocity towards the target latent distribution $p(\bz_0)$ through numerical integration. Finally, $\bz_0$ is decoded to generate the corresponding image.}
    \label{fig:framework}
\end{figure}

In this section, we first introduce the training and sampling procedure of our proposed framework, then present the classifier-free velocity field for class-conditional image generation.

\subsection{Training and sampling procedure}
We give a summary of our framework in \cref{fig:framework}.
% \paragraph{Training}
% %
Given an input sample $\bx_0 \sim p_0$, we encoded it into latent code $\bz_0 = \cE(\bx_0) \in  \mathbb{R}^{d/h}$ via a pretrained VAE encoder with KL regularization \cite{kingma2013auto}, where $h$ is the downsampling rate of the encoder.
In this latent space, the goal is to estimate a probability path to traverse from a random noise $\bz_1 \sim \cN(0, \bI)$ to the source distribution of latent codes $\bz_0$.
The optimization of the velocity network can take advantage of the compact dimension of latent codes, where we use the same objective as in the vanilla flow matching with constant velocity in Equation~\cref{eq:fm-obj-constant}:

\begin{equation}
  \label{eq:lfm-obj-constant}  
  \hat{\theta} = \argmin_{\theta} \bbE_{t, \bz_t} \left[\norm{\bz_1 - \bz_0 - v_\theta\left(\bz_t, t\right) }^2_2 \right].
\end{equation}

To train the conditional latent flow matching, we pass the conditional information $\bc$ to the flow model along with $\bz_t$ as:

\begin{equation}
  \hat{\theta} = \argmin_{\theta} \bbE_{t, \bz_t} \left[\norm{\bz_1 - \bz_0 - v_\theta\left(\bz_t, \bc, t\right) }^2_2 \right].
  \label{eq:clfm-obj-constant}
\end{equation}

With the sampling process, we test different numerical ODE integration schemes, such as Euler or a variant of Runge-Kutta using fixed or adaptive-step.

Following \cite{karras2022elucidating}, we also create custom solvers by performing iterative sampling from discrete time intervals $\{t_i\}^N_{i=1}$ over $N$ finite steps that works with any numerical ODE solvers given a velocity estimator $v_\theta$.

Details on our selection of ODE solvers are shown in \cref{ssec:solvers}, where we observed that adaptive solver such as \texttt{dropri5} \cite{Dormand1980AFO} yields comparable results with Heun solver \cite{zheng2023eds}. Still, for the sake of simplicity, we opt to use the adaptive-step one as the default algorithm for all of our experiments.

Finally, the desired sample $\hat{\bz}$ is then decoded by a pretrained VAE decoder $\mathcal{D}$ to yield the output image $\hat{\bx} = \mathcal{D}(\hat{\bz})$.

\subsection{Conditional generation with classifier-free guidance for velocity field}
Given the significance of class information $\bc$ in GANs \cite{brock2018large,mirza2014conditional} and denoising diffusion models \cite{dhariwal2021diffusion,ho2022cdm,ho2022classifier,zheng2023eds} for conditional image synthesis, it is reasonable to incorporate class label information into the ODE sampler of our flow matching framework.
In \cite{dhariwal2021diffusion}, the authors make use of a pre-trained classifier $p(\bc|\bx)$ to guide the sampling process.
However, this introduces additional complexity to the training pipeline due to the necessity of training an extra classifier.
Moreover, this classifier must be trained on noisy interpolation path $\bx_t$, making it generally hard to incorporate the pre-trained classifier seamlessly.
% %
As a result, we propose a formulation for classifier-free velocity field, inspired from \cite{ho2022classifier}.

First, we rewrite the generative SDE \eqref{eq:generative-sde} using the associated Fokker-Planck equation:
\begin{equation}
  \partial_t \tilde{p}_t = -\mathrm{div}(f(\bx_t, t)\tilde{p}_t) + \frac{g_t^2}{2} \Delta \tilde{p}_t , 
\end{equation}
where $\tilde{p}_t \egaldef p_t(\bx_t|\bx_0, \bc)$ is the conditional probability of a sample given its label $\bc$.
We can transform this into continuity equation of the form \eqref{eq:ode-eulerian} by
\begin{align}
  \partial_t \tilde{p_t} = -\mathrm{div} \left[f(\bx_t, t)p_t - \frac{g_t^2}{2} \frac{\nabla\tilde{p}_t}{\tilde{p}_t} \tilde{p}_t \right]
                        = -\mathrm{div}\left[\left(f(\bx_t, t) - \frac{g_t^2}{2} \nabla \log\tilde{p}_t\right)\tilde{p}_t \right].
\end{align}
This suggests the modified velocity vector has the form
\begin{align}
  \label{eq:equiv-velocity-probaflow}
  \tilde{v}(\bx_t, t) = f(\bx_t, t) - \frac{g_t^2}{2} \nabla \log \tilde{p}_t
                        = f(\bx_t, t) - \frac{g_t^2}{2} (\nabla \log p_t + \nabla_x \log p (\bc | \bx_t)),
\end{align}
since we can factorize $\tilde{p}_t \egaldef p_t(\bx_t|\bx_0, \bc) = Z p_t(\bx_t|\bx_0)p (\bc|\bx_t) $, where $Z$ is the normalizing constant that we can safely ignore after taking gradient of the log-likelihood.
%
% We can parameterize $p (y | \bx_t)$, the conditional probability of a classifier trained on a particular labeled dataset, by $p_{\omega} (y | \bx_t)$ the last softmax layer of a neural network, for example.
%
% Note that this equation is equivalent to transform the generative SDE into the probability flow ODE with an addition classifier guidance term:
% %
% \begin{equation}
%   d\bx_t = [f(\bx_t, t) - \frac{g_t^2}{2}(\nabla_x \log p_t + \nabla_x \log
%   p_{\omega} (y | \bx_t)) ] dt . 
% \end{equation}
%
Using the same argument as in \cite{kingma2021variational} for variance preserving ODE, with the constant velocity path $\bx_t = (1-t)\bx_0 + t\bx_1$ that follows $\cN(\bx_t; (1-t)\bx_0, t^2\bI)$, we have
\begin{equation*}
  \nabla_x \log p_t = \frac{-(\bx_t - (1-t)\bx_0))}{t^2}, % = \frac{-\bx_1}{t},
  \quad f(\bx_t, t) = \frac{-\bx_t}{1-t}, \text{ and} \quad
\frac{g_t^2}{2} = \dfrac{t}{1-t}. 
\end{equation*}
Plugging these into \eqref{eq:equiv-velocity-probaflow}, we get
\begin{equation}
  \tilde{v}(\bx_t, \bc, t) = (\bx_1 - \bx_0)
  - \left(\frac{t}{1-t}\right)\nabla_x \log p(\bc | \bx_t) = v(\bx_t, t) - \left(\frac{t}{1-t}\right)\nabla_x \log p(\bc | \bx_t).
\end{equation}
%
% where we use the fact that with the constant velocity ODE, $\mu_t(\bx_0) = (1 - t)\bx_0, \sigma_t(\bx_0) = t$.
%
This leads to the classifier-guided velocity field, similar to classifier-guided diffusion model in \cite{dhariwal2021diffusion}) as
\begin{equation}
  \label{eq:guided-velocity}  
  \tilde{v}(\bx_t, \bc, t) = v_{\theta}(\bx_t, t) - \gamma\left(\frac{t}{1-t}\right)\nabla_x \log p(\bc | \bx_t) , 
\end{equation}
where we add a scaling factor $\gamma > 0$ that controls the gradient strength.
%
% Eq.~\eqref{eq:guided-velocity} suggests that we can use a pretrained velocity network $v_{\theta}(\bx_t, t)$ and coupled it with a trained classifier $p_{\omega} (y | \bx)$ to make label-conditioned sampling.
%
% \paragraph{Classifier-free velocity}
% The derivation of classifier-guided velocity field suggests
%
%
Finally, to arrive at the classifier-free velocity field formulation, first we add $\bc$ as an input of the velocity field estimation network, denoted $v_{\theta}(\bx_t, \bc, t)$. Then, factorizing again with Bayes rule the term $\nabla_x \log p(\bc|\bx_t)$ in \eqref{eq:guided-velocity} with $\nabla_x \log p(\bx_t | \bx_0, \bc) - \nabla_x \log p(\bx_t | \bx_0)$. From \eqref{eq:guided-velocity}, we have:
\begin{align}
    \tilde{v}(\bx_t, \bc, t) &= v_{\theta}(\bx_t, t) - \gamma\left(\frac{t}{1-t}\right)\nabla_x \log p(\bc | \bx_t) \nonumber\\
     &= v_{\theta}(\bx_t, t) - \gamma\left(\frac{t}{1-t}\right)\left(\nabla_x \log p(\bx_t | \bx_0, \bc) - \nabla_x \log p(\bx_t | \bx_0)\right) \label{eq15}
\end{align}
Using \eqref{eq:velocity-probaflow}, we have $- v_t(\bx_t, \bc, t) + f(\bx_t, t) = \frac{g_t^2}{2} \nabla \log p_t(\bx_t | \bc)$. Since $\bx_t$ is fixed and $f, g$ are well-defined functions, we have:
\begin{align}
 \frac{g_t^2}{2}\left(\nabla \log p_t(\bx_t | \bc) - \nabla \log p_t(\bx_t | \bc = \emptyset)\right) &= - v_t(\bx_t, \bc, t) + f(\bx_t, t) + v_t(\bx_t, \bc = \emptyset, t) - f(\bx_t, t) \nonumber\\
 &= -v_t(\bx_t, \bc, t) + v_t(\bx_t, \bc = \emptyset, t) \label{eq16}
\end{align}\
Substitute \eqref{eq16} into \eqref{eq15}, we can approximate:
\begin{align}
  \tilde{v}_{\theta}(\bx_t, \bc, t) \approx v_{\theta}(\bx_t, t) + \gamma ( v_{\theta}(\bx_t, \bc, t) - v_{\theta}(\bx_t, t)) = \gamma v_{\theta}(\bx_t, \bc, t) + (1 - \gamma) v_{\theta}(\bx_t, \bc=\emptyset, t),
\end{align}
where $v_{\theta}(\bx_t, t)= v_{\theta}(\bx_t, \bc=\emptyset, t)$ denote the unconditional velocity field trained with $\bc$ set to the empty token with a certain probability, akin to the philosophy of dropout in training neural networks.
This enables joint training for the unconditional model and conditional model within a single neural network, which does not include integration of a pretrained classifier.

\subsection{Theoretical analysis: bounding estimation error of latent flow matching}
\label{ssec:theory}
In order to perform theoretical analysis on performance of latent flow matching, we follow the settings of \cite{camuto21towards,kumar2020implicit}.
We assume the decoder is deterministic and is parameterized by a function $g_{\tau}:\bbR^{d/h}\to\bbR^d$, the Gaussian encoder that parameterized the posterior $q^{\phi}(\bz \mid \bx) = \cN(\mu_{\phi}(\bx), \sigma^2_{\phi}(\bx) \bI)$.
In other words, we use diagonal Gaussian reparameterization \cite{kingma2013auto}:
\begin{equation}
\label{eq:vae-reparam}
  \bz = \mu_{\phi}(\bx) + \eta \circ \sigma_{\phi}(\bx) \quad \text{with } \eta \sim \cN(0, \bI),
\end{equation}
where $\circ$ denotes element-wise multiplication.
We then have the following theorem.
\begin{theorem}[Proof in \cref{sec:appendix-proof}]
  \label{thm:wasserstein-bound}
  Let $p_0: \bbR^d \to \bbR$ denote the ground truth distribution of data
  $\bx_0$, $\hat{p}_0$ the distribution of reconstructed sample
  $\hat{\bx}$.
  Let the latent code $\bz$ be encoded following diagonal Gaussian reparameterization trick in \eqref{eq:vae-reparam}.
  % $p^{\phi}(\bz)$ the latent representation of $p_0$ with a learned VAE,
  Let $v(\bz_t, t)$ be the solution of the following continuity equation
  \begin{equation*}
    \partial_t q_t^{\phi} = - \mathrm{div}(v(\bz_t, t) q_t^{\phi}), \quad\quad
    q_{t=1}^{\phi} = q^{\phi}_1,
  \end{equation*}
  and $\hat{v}(\hat{\bz}_t, t)$ be an approximation of $v(\bz_t, t)$ using the flow matching objective \eqref{eq:lfm-obj-constant}, or in other words, $\hat{v}(\hat{\bz}_t, t)$ is the velocity field that defines $\hat{q}_t$, a solution of
  \begin{equation}
    \partial_t \hat{q}_t = - \mathrm{div}(v(\hat{\bz}_t, t) \hat{q}_t), \quad\quad
    \hat{q}_{t=1} = q^{\phi}_1.
  \end{equation}
  Assume that
  \begin{enumerate}
    \item The estimated velocity $\hat{v}$ is continuously differentiable in $(\bz, t)$, and, Lipschitz in $\bz$ uniformily on $R^{d/h} \times [0, 1]$ with constant $\hat{L}$.
   \item The deterministic decoder $g_{\tau}$ is Lipschitz continuous in $\bz$ with constant $L_{g_{\tau}}$.
    \item The distance from the quantity induced by taking  $g_{\tau}(\bz_0)$, \ie a VAE reconstructed sample, with any real data point $\bx_0 \sim p_0$ is defined by a quantity $\Delta_{f_{\phi},g_{\tau}}(\bx_0)$. In other words, $g_{\tau}(\bz_0) = \bx_0 + \Delta_{f_{\phi},g_{\tau}}(\bx_0)$, and we assume that it has bounded $L_2$ norm.
  \end{enumerate}
  Then we have the following bound on the squared Wasserstein-2 distance of the
  reconstructed distribution $\hat{p}_0$ and the ground truth $p_0$:
  \begin{equation}
    \label{eq:wasserstein-bound}
    \cW_2^2(p_0, \hat{p}_0) \leq \norm{\Delta_{f_{\phi},g_{\tau}}(\bx_0)}^2+ L^2_{g_{\tau}} e^{1 +
      2\hat{L}} \int_0^1\int_{\bbR^{d/h}} \abs{v(\bz_t, t) - \hat{v}(\hat{\bz}_t, t)}^2 \mathrm{d}q_t^{\phi}\mathrm{d}t,
  \end{equation}
  where $\Delta_{f_{\phi},g_{\tau}}(\bx_0)$ are constants that depend only on $f_{\phi}$ and $g_{\tau}$, the stochastic encoder and decoder.
\end{theorem}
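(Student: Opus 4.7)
The plan is to bound $\cW_2^2(p_0, \hat{p}_0)$ by constructing an explicit coupling between $p_0$ and the reconstructed distribution $\hat{p}_0$, and then controlling the resulting transport cost via three ingredients: (i) the VAE reconstruction gap from Assumption~3, (ii) the Lipschitz continuity of the decoder from Assumption~2, and (iii) a Grönwall-type argument applied to the trajectory difference between the true and estimated flows, exploiting Assumption~1.

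First, I would build a specific coupling: draw $\bx_0 \sim p_0$ and encode via the reparameterization $\bz_0 = \mu_\phi(\bx_0) + \eta \circ \sigma_\phi(\bx_0)$ with $\eta \sim \cN(0,\bI)$, sample a single terminal noise $\bz_1 \sim q_1^\phi$, let $\bz_t$ solve the true continuity ODE from $\bz_1$ down to $\bz_0$ at $t=0$, and let $\hat{\bz}_t$ solve the estimated ODE from the same $\bz_1$. Setting $\hat{\bx}_0 \egaldef g_\tau(\hat{\bz}_0)$ yields a valid coupling of $p_0$ and $\hat{p}_0$, so $\cW_2^2(p_0, \hat{p}_0) \leq \bbE\|\bx_0 - g_\tau(\hat{\bz}_0)\|^2$. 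Next, I would split the displacement as $\bx_0 - g_\tau(\hat{\bz}_0) = [\bx_0 - g_\tau(\bz_0)] + [g_\tau(\bz_0) - g_\tau(\hat{\bz}_0)]$, identifying the first bracket with $-\Delta_{f_\phi,g_\tau}(\bx_0)$ by Assumption~3 and bounding the second by $L_{g_\tau}\|\bz_0 - \hat{\bz}_0\|$ via Assumption~2.

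Second, to control $\|\bz_0 - \hat{\bz}_0\|^2$, let $e_t \egaldef \bz_t - \hat{\bz}_t$ and differentiate $\|e_t\|^2$ along the coupled trajectories. Using the split $v(\bz_t,t) - \hat{v}(\hat{\bz}_t,t) = [\hat{v}(\bz_t,t) - \hat{v}(\hat{\bz}_t,t)] + [v(\bz_t,t) - \hat{v}(\bz_t,t)]$, Assumption~1, and Young's inequality $2\langle a,b\rangle \leq \|a\|^2 + \|b\|^2$, I would obtain a differential inequality of the shape $\tfrac{d}{dt}\|e_t\|^2 \leq (1 + 2\hat{L})\|e_t\|^2 + \|v(\bz_t,t) - \hat{v}(\hat{\bz}_t,t)\|^2$. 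Integrating backwards from $t=1$ (where $e_1 = 0$) and invoking Grönwall's lemma then yields $\|e_0\|^2 \leq e^{1+2\hat{L}} \int_0^1 \|v(\bz_t,t) - \hat{v}(\hat{\bz}_t,t)\|^2\, dt$. Taking expectations under the coupling, for which $\bz_t \sim q_t^\phi$ by construction, produces the double integral $\int_0^1 \int_{\bbR^{d/h}} |v - \hat{v}|^2 \, dq_t^\phi\, dt$, and combining with the reconstruction term yields the bound \eqref{eq:wasserstein-bound}.

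\textbf{The main obstacle} will be the Grönwall step together with the choice of triangle decomposition: the estimates must be arranged so that the integrand ends up as the flow-matching objective $\|v(\bz_t,t) - \hat{v}(\hat{\bz}_t,t)\|^2$ appearing in \eqref{eq:lfm-obj-constant}, rather than the mixed term $\|v(\bz_t,t) - \hat{v}(\bz_t,t)\|^2$ that the naive split produces, while all Lipschitz contributions consolidate into a single exponential factor $e^{1 + 2\hat{L}}$. A secondary point requiring care is combining the VAE reconstruction gap $\|\Delta_{f_\phi,g_\tau}(\bx_0)\|^2$ and the decoder-Lipschitz term into a clean additive sum; depending on how $\bbE[\langle \Delta, g_\tau(\bz_0) - g_\tau(\hat{\bz}_0)\rangle]$ is handled (e.g.\ via a sharp Young inequality or by absorbing a factor into the constants), the final statement may need to be read with mild slack in the leading constants.
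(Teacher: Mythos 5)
Your plan follows essentially the same two-stage structure as the paper's proof: (i) peel off the VAE reconstruction gap and use the decoder's Lipschitz constant to reduce the pixel-space Wasserstein distance to the latent-space Wasserstein distance, and (ii) bound the latent Wasserstein distance by the flow-matching error with an $e^{1+2\hat L}$ factor. The difference is that for stage (ii) the paper simply invokes Proposition~3 of Albergo \& Vanden-Eijnden \cite{albergo2022building} as a black box, whereas you re-derive it via an explicit coupling of the two flows started from the same $\bz_1$, a Young-inequality split, and Gr\"onwall. Your route is more self-contained and elementary, and it makes the role of Assumption~1 transparent; the paper's route is shorter and puts the ODE-stability analysis in the cited reference.

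Two remarks on the points you flagged as obstacles, both of which are well spotted. First, the cross term $\bbE[\langle \Delta_{f_\phi,g_\tau}(\bx_0),\, g_\tau(\bz_0)-g_\tau(\hat\bz_0)\rangle]$: the paper's write-up passes from $\int \|g_\tau(\bz_0)-\Delta-\hat\bx_0\|^2 d\pi$ to $\|\Delta\|^2 + \int\|g_\tau(\bz_0)-g_\tau(\hat\bz_0)\|^2 d\pi$ with no further justification, i.e.\ it implicitly drops the cross term and treats $\|\Delta\|^2$ as a deterministic constant outside the integral. You are right that a careful argument needs either a Young/Cauchy--Schwarz step (costing a multiplicative slack in the constants) or an assumption that $\Delta$ is uncorrelated with the transport displacement; your proposal is actually more careful than the paper here. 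Second, the integrand: the Gr\"onwall split you describe naturally yields $\|v(\bz_t,t)-\hat v(\bz_t,t)\|^2$ (both velocities at the same point $\bz_t$), which is exactly what Albergo's Proposition~3 and the paper's own proof use. The theorem statement's $\hat v(\hat\bz_t,t)$ appears to be a notational slip rather than a genuinely different quantity, so you should not twist the estimate to produce it; the integrand you obtain is the one the proof actually requires.
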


\Cref{thm:wasserstein-bound} shows that the minimizing flow matching objective \eqref{eq:fm-obj} in latent space controls the Wasserstein distance between the target density $p_0$ and the reconstructed density $\hat{p}_0$.
We note that the bounded quantity on the LHS of \eqref{eq:wasserstein-bound} coincides with Fréchet inception distance (FID) \cite{heusel2017gans}, one of the most popular metrics for ranking generative models. % in generative modeling.
This means latent flow matching is guaranteed to control this metric, given reasonable estimation of $\hat{v}(\bz_t, t)$.
Nonetheless, our theoretical analysis also suggests that the quality of matching samples in latent flow significantly depends on the constants that define the expressive capacities of the decoders and encoders.
This highlights the tradeoff between computational speed and the quality of generated samples, which has been empirically observed in prior research on generative modeling in latent space:
for example, both \cite{rombach2022high,vahdat2021scorebased} emphasize the critical role of utilizing good VAE architectures as backbone models for latent diffusion models.

\section{Experiments}

In our experiments, we use the pretrained VAE \cite{kingma2013auto} from Stable Diffusion \cite{rombach2022high}. The VAE encoder has a downsampling factor of 8 given an RGB pixel-based image $\bx \in \bbR^{h \times w \times 3}$, $\bz = \mathcal{E}(\bx)$ has
shape $\frac{h}{8} \times \frac{w}{8} \times 4$. All experiments are operated in the latent space.

\begin{figure}[t]
    \centering
    \includegraphics[width=\linewidth]{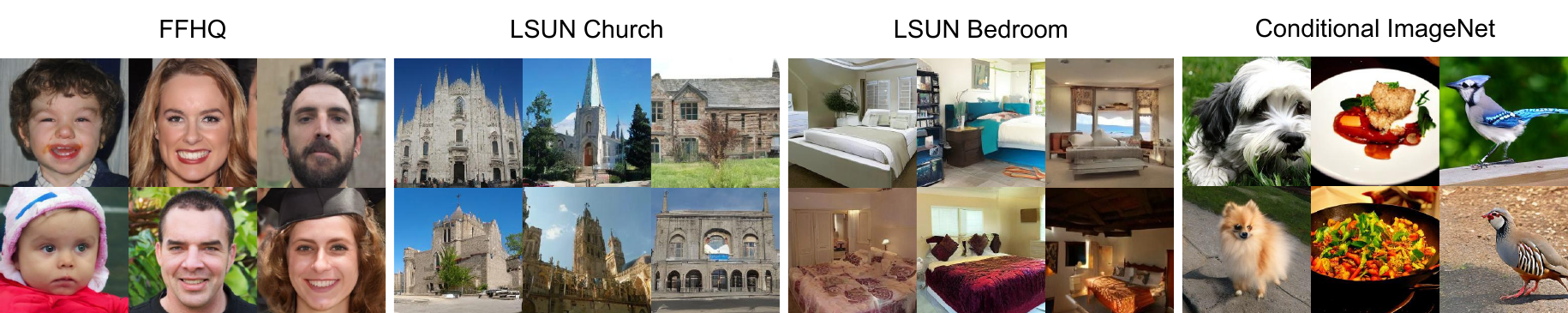}
    
    \vspace{-1mm}
    \caption{Qualitative examples of our unconditional generation on FFHQ and LSUN Church \& Bedroom and class conditional generation on ImageNet.}
    \label{fig:qualitatives}
    \vspace{-1mm}
\end{figure}

\begin{figure}
    \centering
    \includegraphics[width=.75\linewidth]{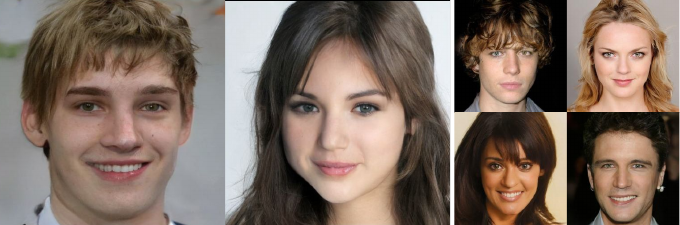}
    \vspace{-1mm}
    \caption{CelebA-HQ 512 and 256}
    \label{fig:celeb_samples}
    \vspace{-5mm}
\end{figure}

\subsection{Unconditional Generation}
We conducted experiments on various datasets, including CelebA-HQ \cite{karras2017progressive}, FFHQ \cite{karras2019style}, LSUN Bedrooms \cite{yu2015lsun}, and LSUN Church \cite{yu2015lsun}, with the same resolution of $256 \times 256$. We further benchmarked on CelebA-HQ with higher resolutions, including $512 \times 512$. Additionally, we perform our measurements on two distinct architectures: a CNN-based UNet (ADM) \cite{dhariwal2021diffusion} and a transformer network (DiT) \cite{Peebles2022DiT} for a comprehensive evaluation. To ensure a fair comparison, we employ two commonly used metrics for evaluating the performance of our proposed method: the Fréchet inception distance (FID) \cite{heusel2017gans} and Recall \cite{kynkaanniemi2019improved}. We also measure the average running time and the number of function evaluations for single image generation to assess the computational efficiency of our method. In \cref{tab:celeb256}, our method attains better FID scores than original flow matching FM on CelebA-HQ 256, at 5.82 and 5.26 for the CNN-based and transformer network architectures, respectively. On CelebA-HQ 512 and FFHQ, in \cref{fig:face256}, our method also shows superior performance than other diffusion counterparts. Furthermore, our method achieves comparable results with existing methods on the LSUN Church and Bedrooms datasets, as demonstrated in  \cref{fig:lsun}. In all experiments, it is clear that DiT produces better performance than ADM even though it is inherently designed for conditional image generation. Their qualitative results are provided at \cref{fig:qualitatives} and \cref{fig:celeb_samples}.

% Add some comments each table

\setlength{\tabcolsep}{3pt}
\begin{table}[t]
\caption{Results on CelebA-HQ and FFHQ datasets. (*) is measured on our machine}
\vspace{2mm}
\centering
\subfloat[CelebA-HQ $256$ \& $512$]{
\raisebox{.0\height} {
\resizebox{.459\linewidth}{!}{
% \begin{tabu} to 0.4\textwidth{ccc}
  \begin{tabu}{l c c c c}
    \toprule
    Model & NFE$\downarrow$ & FID$\downarrow$ & Recall$\uparrow$ & Time(s)$\downarrow$ \\
    \midrule
    \multicolumn{5}{c}{\textbf{CelebA-HQ 256}} \\
    \midrule
    Ours (ADM) & 85 & 5.82 & 0.41 & 3.42 \\
    Ours (DiT L/2) & 89 & 5.26 & 0.46 & 1.70 \\
    FM \cite{lipman2023flow}* &128  & 7.34 & - & -\\
    \midrule
    LDM \cite{rombach2022high} & 50 & \textbf{5.11} & \textbf{0.49} & 2.90* \\
    LSGM \cite{vahdat2021scorebased} & 23 & 7.22 & - & - \\
    WaveDiff \cite{phung2023wavediff} & 2 & 5.94 & 0.37 & 0.08 \\
    DDGAN \cite{xiao2021tackling} & 2 & 7.64 & 0.36 & - \\
    Score SDE \cite{song2020score} & 4000 & 7.23 & - & - \\
    \midrule
    \multicolumn{5}{c}{\textbf{CelebA-HQ 512}} \\
    \midrule
    Ours (ADM) & 94 & \textbf{6.35} & \textbf{0.41} & 5.13 \\
    WaveDiff \cite{phung2023wavediff} & 2 & 6.40 & 0.35 & 0.59 \\
    DDGAN \cite{xiao2021tackling} & 2 & 8.43 & 0.33 & 1.49 \\
    \bottomrule
\end{tabu}
}
}
\label{tab:celeb256}
}
\subfloat[FFHQ $256\times256$]{
% \raisebox{-.28\height} {
\resizebox{.479\linewidth}{!}{
  \begin{tabu}[t]{l c c c c}
    \toprule
      % \vspace{10mm}
    Model &NFE & FID$\downarrow$ & Recall$\uparrow$ & Time(s)$\downarrow$ \\
    \midrule
    Ours (ADM) & 84 & 8.07 & 0.40 & 5.1 \\
    Ours (DiT L/2) & 88 & \textbf{4.55} & 0.48 & 1.67 \\
    \midrule
    LDM \cite{rombach2022high} & 50 & 4.98 & \textbf{0.50} & 2.90* \\

    ImageBART \cite{esser2021imagebart} & 3 & 9.57 & - & - \\
    \midrule
    ProjectedGAN \cite{Sauer2021NEURIPS} & 1 & 3.08 & 0.46 & - \\
    StyleGAN \cite{karras2019style} & 1 & 4.16 & 0.46 & - \\
    %CIPS & 2.92 & - \\
    \bottomrule
      \vspace{-70mm}
    \end{tabu}
    }
    }
    \label{tab:ffhq}
% }
\vspace{-6mm}
\label{fig:face256}
\end{table}

\setlength{\tabcolsep}{3pt}
\begin{table}[t]
\centering
\caption{Results on LSUN datasets. (*) is measured on our machine }
\vspace{2mm}
\subfloat[LSUN-Church $256\times256$]{
\raisebox{.0\height} {
% \begin{tabu} to 0.4\textwidth{ccc}
  \begin{tabu} to 0.4\textwidth{l c c c}
    \toprule
    Model & FID$\downarrow$ & Recall$\uparrow$ & Time(s)$\downarrow$ \\
    \midrule
    Ours (ADM) & 7.7 & 0.39 & 3.5 \\
    % Ours (DiT L/2) & 6.62 & 0.47 & 1.67 \\
    Ours (DiT L/2) & 5.54 & 0.48 & 1.67 \\
    FM \cite{lipman2023flow}  &10.54 & - &-\\
    \midrule
    LDM \cite{rombach2022high} & 4.02 & 0.52 & 3.79* \\
    
    WaveDiff \cite{phung2023wavediff} & 5.06 & 0.40 & 1.54 \\
    DDPM \cite{ho2020denoising} & 7.89 & - & - \\
    ImageBART \cite{esser2021imagebart} & 7.32 & - & - \\
    %Gotta Go Fast & 25.67 & - \\
    \midrule
    % PGGAN \cite{karras2017progressive} & 6.42 & - & - \\
    StyleGAN \cite{karras2019style} & 4.21 & - & - \\
    StyleGAN2 \cite{karras2020training} & 3.86 &0.36 & - \\
    ProjectedGAN \cite{Sauer2021NEURIPS} & \textbf{1.59} & 0.44 & - \\
    % DiffGAN \cite{wang2023diffusiongan} & 1.85 & \textbf{0.65} & - \\
    
    \bottomrule
\end{tabu}
% \vspace{-10mm}
}
\label{tab:church}
}
~
\subfloat[LSUN-Bedrooms $256\times256$]{
% \raisebox{-.1\height} {
  \begin{tabu} to 0.4\textwidth{l c c c}
    \toprule
    Model & FID$\downarrow$ & Recall$\uparrow$ & Time(s)$\downarrow$ \\
    \midrule
    Ours (ADM) & 7.05 & 0.39 & 5.42 \\
    % Ours (DiT L/2) & 5.62 & 0.41 & 1.67 \\
    Ours (DiT L/2) & 4.92 & 0.44 & 1.67 \\
    
    \midrule
    LDM \cite{rombach2022high} & 2.95 & 0.48 & 2.90* \\
    DDPM \cite{ho2020denoising} & 4.9 & - & - \\
    ImageBART \cite{esser2021imagebart} & 5.51 & - & - \\
    ADM \cite{dhariwal2021diffusion} & \textbf{1.90} & \textbf{0.51} & - \\
    \midrule
    PGGAN \cite{karras2017progressive} &8.34 &- &-\\
    StyleGAN \cite{karras2019style} & 2.35 & 0.48 & - \\
    ProjectedGAN & 1.52 & 0.34 & - \\
    DiffGAN \cite{wang2023diffusiongan} & \textbf{1.43} & \textbf{0.58} & - \\
    
    \bottomrule
% \vspace{2mm}
    \end{tabu}
    % }
    \label{tab:bed}
}
\label{fig:lsun}
\vspace{-6mm}
\end{table}

%%%%%%
\setlength{\tabcolsep}{3pt}
\begin{table}[t]
\centering
\caption{Quantitative results on conditional tasks. }
\vspace{2mm}
\subfloat[Conditional ImageNet ($256$)]{
% \raisebox{.0\height} {
\resizebox{.32\linewidth}{!}{
  \begin{tabu}{l c c c}
    \toprule
    Model & FID$\downarrow$ & Recall$\uparrow$ & Params \\
    \midrule
    Ours (ADM) & 16.71 & 0.56 & 387M \\
    + cfg=1.25 & 8.58 & 0.50 & 387M \\
    % Ours (DiT B/2) & 24.99 & 0.37 & 130M \\
    % + cfg=1.5 & 5.76 & 0.34 & 130M \\
    Ours (DiT B/2) & 20.38 & 0.56 & 130M \\
    + cfg=1.25 & 8.77 & 0.48 & 130M \\
    + cfg=1.5 & \textbf{4.46} & 0.42 & 130M \\
    \midrule
    LDM-8 \cite{rombach2022high} & 15.51 & \textbf{0.63} & 395M \\
    LDM-8-G & 7.76 & 0.35 & 506M \\
    % LDM-4 & 10.56 & 0.62 \\
    % LDM-4-G (cfg=1.5) & 3.60 & 0.48 \\
    \midrule
    DiT-B/2 \cite{Peebles2022DiT} & 43.47 & - & 130M \\
    % DiT-XL/2 & 9.62 & 0.67 \\
    % DiT-XL/2-G (cfg=1.25) & 3.22 & 0.62 \\
    % \midrule
    % ADM \cite{dhariwal2021diffusion} & 10.94 & 0.63 \\
    % ADM-U & 7.49 & 0.63 \\
    % ADM-G & 4.59 & 0.52 \\
    % ADM-G, ADM-U & 3.94 & 0.53 \\
    \bottomrule
\vspace{-42mm}
\end{tabu}
}
% }
\label{tab:imagenet}
}
\subfloat[Semantic-to-image]{
% \raisebox{-.03\height} {
\resizebox{.205\linewidth}{!}{
  \begin{tabu}{l c}
    \toprule
        Model & FID$\downarrow$ \\
        \toprule
        Ours (ADM) & 26.3\\
        \midrule
        Pix2PixHD \cite{wang2018high} &38.5 \\
        SPADE \cite{park2019semantic} &29.2 \\
        DAGAN  \cite{tang2020dual} &29.1 \\
        CLADE \cite{tan2021efficient} &30.6 \\
        SCGAN \cite{wang2021image} &20.8 \\
        SDM \cite{wang2022semantic} & \textbf{18.8} \\
        \bottomrule
% \vspace{0.1mm}
    \end{tabu}
    }
    \label{tab:sem2im}
    }
% }
\subfloat[Image Inpainting]{
% \raisebox{-.03\height} {
\resizebox{.37\linewidth}{!}{
  \begin{tabu}{l c c c}
    \toprule
        Model & FID$\downarrow$ &P-IDS$\uparrow$ &U-IDS$\uparrow$\\
        \toprule
        Ours (ADM) & 4.09 &13.25 &21.59\\
        \midrule
        MAT \cite{li2022mat} &\textbf{2.94} &\textbf{20.88} &\textbf{32.01}\\
        LaMa \cite{suvorov2022resolution} &3.98  &8.82 &22.57\\
        ICT  \cite{wan2021high} &5.24 &4.51 &17.39\\
        MADF \cite{zhu2021image} &10.43 &6.25 &14.62\\
        DeepFill v2 \cite{yu2019free} &5.69  &6.62 &16.82\\
        EdgeConnect \cite{nazeri2019edgeconnect} &5.24  &5.61 &15.65\\
        \bottomrule
% \vspace{12mm}
    \end{tabu}
    }
    \label{tab:inpainting}
    }
% }
\label{fig:conditional}
\vspace{-5mm}
\end{table}

% \begin{table}[t]
%   \centering
    
%   \begin{tabular}{l c c}
%     \toprule
%     Model & FID$\downarrow$ & Recall$\uparrow$ \\
%     \midrule
%     Ours (ADM) &  &  \\
%     Ours (DiT B/2) & 50.73 & \\
%     Ours-G (DiT B/2, cfg=3.) & 9.29 &  \\
%     \midrule
%     LDM-8 \cite{rombach2022high} & 15.51 & 0.63 \\
%     LDM-8-G & 7.76 & 0.35 \\
%     LDM-4 & 10.56 & 0.62 \\
%     LDM-4-G (cfg=1.5) & 3.60 & 0.48 \\
%     \midrule
%     DiT-B/2 \cite{Peebles2022DiT} & 43.47 & - \\
%     DiT-XL/2 & 9.62 & 0.67 \\
%     DiT-XL/2-G (cfg=1.25) & 3.22 & 0.62 \\
%     \midrule
%     ADM \cite{dhariwal2021diffusion} & 10.94 & 0.63 \\
%     ADM-U & 7.49 & 0.63 \\
%     ADM-G & 4.59 & 0.52 \\
%     ADM-G, ADM-U & 3.94 & 0.53 \\
    
%     \bottomrule
%   \end{tabular}
%     \vspace{2mm}
%   \caption{Results on ImageNet $256\times256$.}
%   \label{tab:imnet}
% \end{table}

\begin{table}[t]
  \centering
  \caption{Ablation study on both adaptive and fixed-step ODE solvers on CelebA-HQ (256).}
  \vspace{1mm}
  \begin{tabular}{l c c c}
    \toprule
    Solver & NFE $\downarrow$ & FID$\downarrow$ & Time (s)$\downarrow$ \\
    \midrule
    Adaptive & 89 & \textbf{5.26} & \textbf{1.7} \\
    \midrule
    Euler & 90 & 5.51 & 1.78 \\
    Heun & \textbf{50} &\textbf{5.26} & 1.83 \\
    \bottomrule
  \end{tabular}
  \label{tab:solver}
\vspace{-4mm}
\end{table}

\subsection{Conditional Generation}

\minisection{Conditional ImageNet \cite{Deng2009ImageNetAL}.} Given the class label, our method can generate desired examples without relying on a pretrained classifier. As mentioned earlier, we select the DiT base backbone as the default network for this conditional task due to its comparability and relatively small size, making it well-suited for large-scale datasets like ImageNet. Our proposed classifier-free velocity field enables flexible control over class conditioning, leading to a significant improvement in image generation results. By incorporating this technique, we observe a remarkable reduction in the FID score for both the DiT and ADM variants. As shown in \cref{tab:imagenet}, for the DiT variant, the FID score has decreased from 20.39 to 4.46 ($\text{cfg}=1.5$) while  for the ADM variant, the measure has similarly reduced from 16.71 to 8.58 ($\text{cfg}=1.25$). Despite our network being significantly smaller than LDM, our classifier-free technique demonstrates superior performance gains compared to the larger model. Due to resource constraints, we are unable to scale our approach to the DiT network with a larger size, such as DiT-L/2 and DiT-XL/2, at this time. However, it is worth noting that if we were able to implement this scaling, it could potentially result in a further enlargement of the gap. This possibility could be explored in future development.

\minisection{Inpainting Task.}
In the inpainting task, we must fill the missing region of input images with appropriate content. Following the training technique from \cite{rombach2022high}, given a masked image $\bx_m $ and the corresponding mask $\mathbf{m}$, we pass the masked image through the encoder to get $\bz_m$ and resize the mask $\mathbf{m}$ to $\Bar{\mathbf{m}}$ which has the same shape with $\bz_m$. To condition on the masked image and mask, we concatenate $\bz_t, \bz_m,$ and $\Bar{\mathbf{m}}$ and use it as input to flow matching network $v_\theta (\bz_t, \bc = [\bx_m, \textbf{m}], t) = v_\theta (\text{concat}[\bz_t, \bz_m, \mathbf{\bar{m}}], t)$. Our experiment is conducted on CelebA-HQ 256 \cite{karras2017progressive} and follows the training and evaluation protocol of MAT \cite{li2022mat}. We split data into 24,183 and 2,993 images for training and evaluation. To evaluate the inpainting performance, we utilize the following perceptual metrics FID \cite{heusel2017gans}, P-IDS \cite{zhao2021large}, and U-IDS \cite{zhang2018unreasonable}. \cref{tab:inpainting} shows that our method outperforms most existing inpainting methods and is comparable to LaMa \cite{suvorov2022resolution}. We achieve the FID score of 4.09 using a very simple concatenation trick, and this result is not far from the SOTA number (2.94) produced by MAT \cite{li2022mat}, which is specifically designed for the inpainting task. Our qualitative result is shown in \cref{fig:inpainting}.

\minisection{Semantic-to-image.} For the semantic-to-image task, we have to synthesize photorealistic images given an input semantic layout $\textbf{m}$ with $\mathcal{C}$ classes. Following the layout conditional technique from \cite{rombach2022high}, the one-hot representation of $\textbf{m}$ is firstly interpolated to have the same spatial resolution with $\bz_t$. We then train a conditional network $\omega_\phi$ parameterized by $\phi$ that takes the interpolated one-hot representation of $\textbf{m}$ as input and reduces the number of channels from $\mathcal{C}$ to the number channels of $\bz_t$, $\textbf{m}_c = \omega_\phi(\textbf{m})$ where $\textbf{m}_c$ have the same shape with $\bz_t$. Similar to inpainting task, the flow matching network $v_\theta (\bz_t,  \bc = \textbf{m}, t) = v_\theta (\text{concat}[\bz_t, \mathbf{m}_c], t)$ takes the concatenation of $\textbf{m}_c$ and $\bz_t$ as input. The conditional network $\omega_\phi$ is jointly trained along with the latent flow matching network $v_\theta$. We evaluate our method on CelebA-HQ 256 \cite{karras2017progressive} with 27,000 images for training and 3000 for evaluation FID score \cite{heusel2017gans}. As shown in \cref{fig:semantic}, our model can generate high-quality images, which are close to the ground truth images with regard to semantic layout. Our simple layout-to-image technique achieves an FID score of 26.3 (see \cref{tab:sem2im}) in comparison with other methods such as SCGAN \cite{wang2021image} using saliency guided technique and SDM \cite{wang2022semantic} with special ADA normalization and classifier-free guidance.

\begin{figure}[!ht]
\centering
\subfloat[Semantic Image Synthesis]{
    \includegraphics[width=0.46\linewidth]{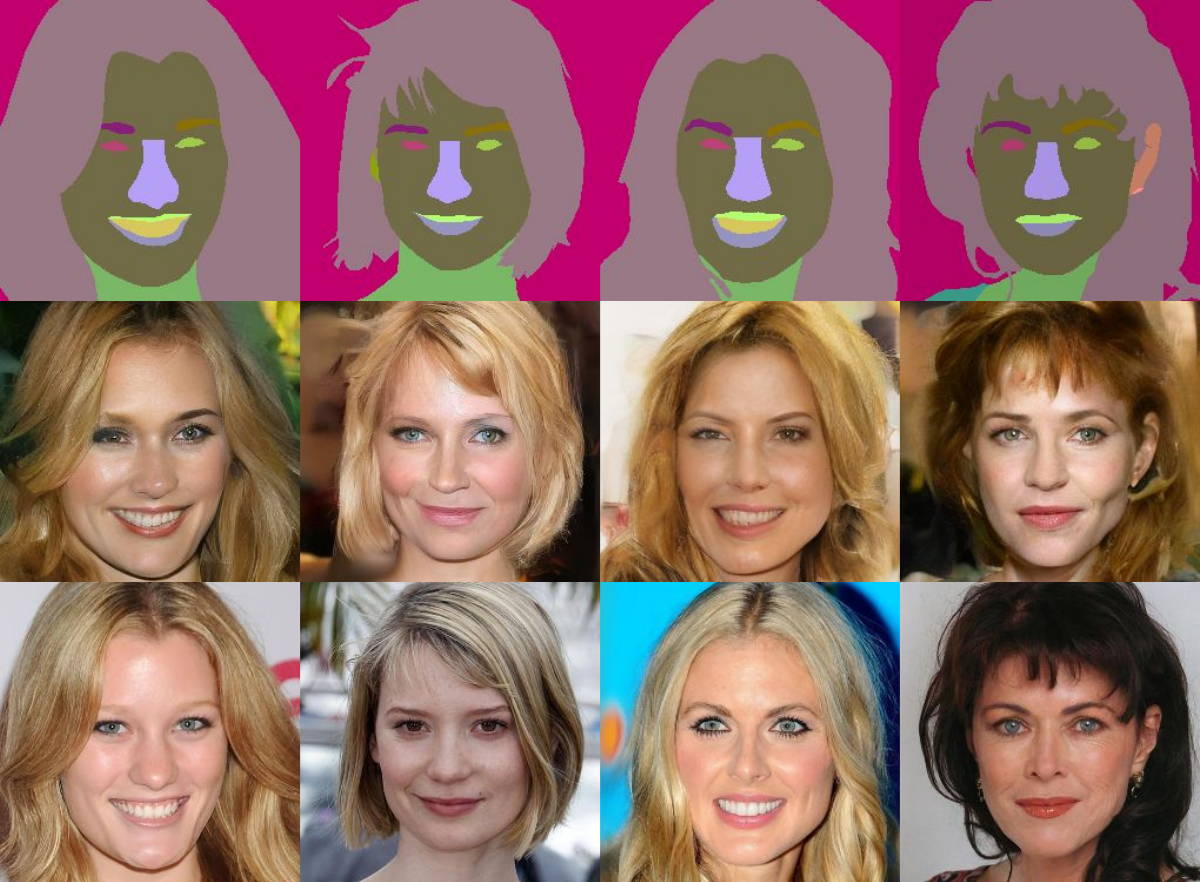}
    \label{fig:semantic}
}
~
\subfloat[Image Inpainting]{
    \includegraphics[width=0.46\linewidth]{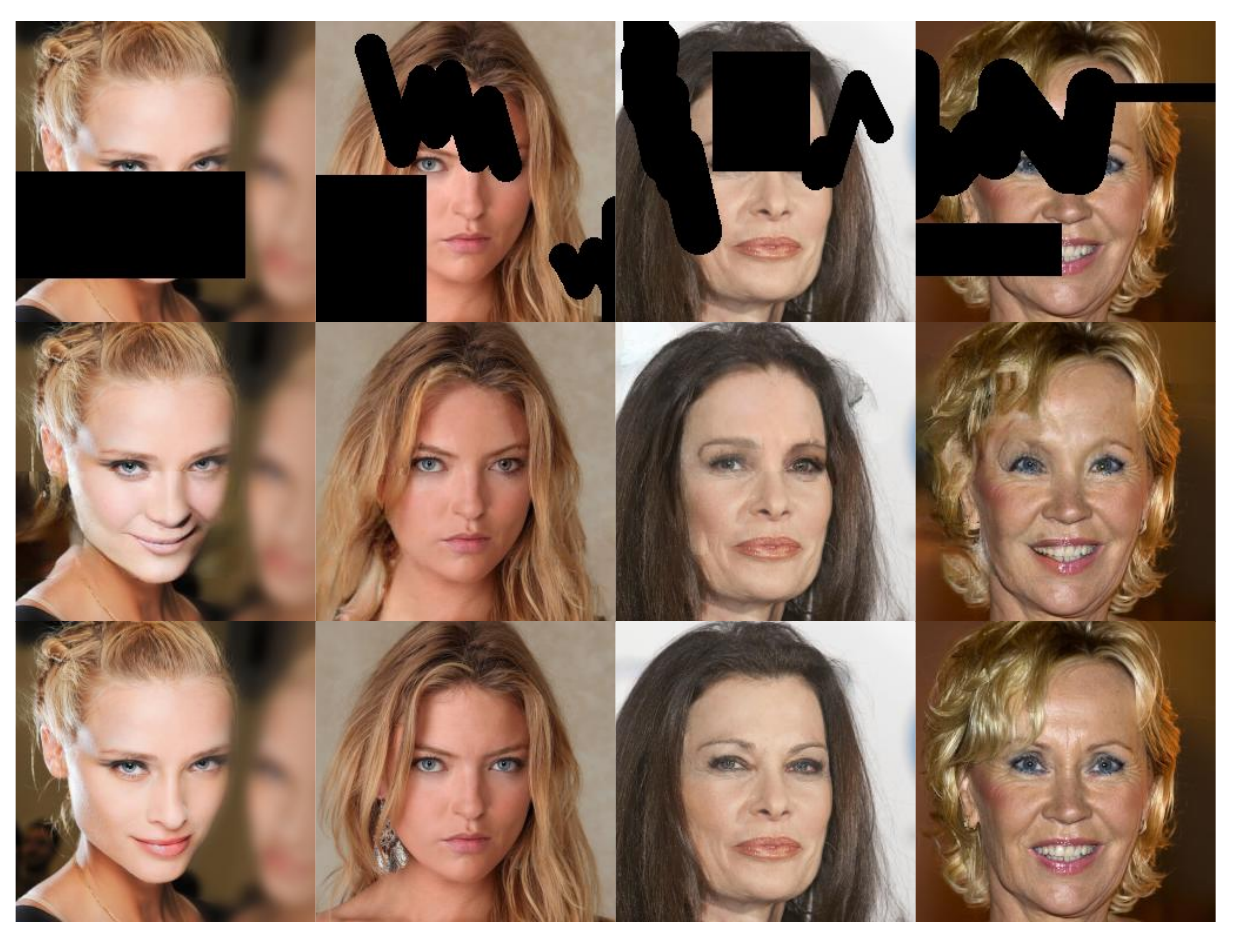}
    \label{fig:inpainting}
}
\caption{\textbf{Image-conditional generation tasks.} For each task, we provide the conditional inputs (top), our results (middle), and the ground truth images (bottom).}
    \label{fig:my_label}
\vspace{-4mm}
\end{figure}

% \begin{table}[!ht]
%     \centering
%     \begin{tabular}{c|c}
%         Model & FID $\downarrow$ \\
%         \toprule
%         Ours (ADM) & 26.3\\
%         \midrule
%         Pix2PixHD \cite{wang2018high} &38.5 \\
%         SPADE \cite{park2019semantic} &29.2 \\
%         DAGAN  \cite{tang2020dual} &29.1 \\
%         CLADE \cite{tan2021efficient} &30.6 \\
%         SCGAN \cite{wang2021image} &20.8 \\
%         SDM \cite{wang2022semantic} & 18.8 \\
%         \bottomrule
%     \end{tabular}
%     \caption{Quantitative comparison with existing methods on semantic image synthesis.}
%     \label{tab:my_label}
% \end{table}
\subsection{Ablation on ODE solvers}
\label{ssec:solvers}
% \begin{tabular}{l c c}
%     \toprule
%     Solver & FID$\downarrow$ & Time (s)$\downarrow$ \\
%     \midrule
%     Ours (ADM) & 7.7 &  &  \\

%     \bottomrule
%   \end{tabular}
%     \vspace{2mm}
%   \caption{Results on LSUN-Church $256\times256$.}
%   \label{tab:solver_comp}
% \end{table}
 As shown in \cref{tab:solver}, the Heun fixed-step solver achieves comparable performance to the adaptive solver with only 50 steps. However, it is slightly slower than the adaptive solver due to the requirement of an additional one-step forward to correct the estimated velocity. In this work, we have selected the adaptive solver instead of the fixed-steps solver, despite acknowledging the promising performance of the Heun solver. Further analysis of fix-steps solvers will be presented in \cref{sec:ode_analysis}.
\label{sec:solvers}

\section{Conclusions}
We introduce a novel approach, called latent flow matching for image generation, which attains competitive performance across various image synthesis tasks and helps bridge the gap of flow matching framework with existing diffusion models. By incorporating a classifier-free guidance for velocity field, we empower the seamless integration of class conditions into the process of flow matching for improving class-conditional image generation. Through extensive empirical evaluations, our method demonstrates the simplicity and effectiveness in enhancing the efficiency of flow matching models. These achievements prove the potential and applicability of our method in driving further generative AI research, particularly for a large-scale system like text-to-image synthesis.

\minisection{Limitation and societal impact.} As our method offers favorable performance across several tasks, it also poses some potential risks related to misinformation dissemination. Training such a model also emits a substantial amount of greenhouse gases.

\bibliographystyle{plain}
\bibliography{biblio}
\newpage
\appendix
\section{Additional background and proofs}
\label{sec:appendix-proof}

Before going into the proof of \Cref{thm:wasserstein-bound}, we introduce additional background on optimal transport.
Throughout this section, without loss of generality, we use $\norm{\cdot}$ to denote the Euclidean norm in $\bbR^d$.

% \subsection{Background on optimal transport}

\begin{definition}[2-Wasserstein distance \cite{villani2009optimal}]
  \label{def:2wass}
  Given probability measures $\alpha, \beta \in \bbR^d$, we define the squared 2-Wasserstein distance as
  \begin{equation}
    \label{eq:2-wasserstein}
    \cW_2^2(\alpha, \beta) \egaldef \min_{\pi \in \Pi(\alpha, \beta)} \int_{\bbR^{d\times d}} \norm{\bx - \by}^2 d\pi(\bx, \by),
  \end{equation}
  where $\Pi(\alpha, \beta)$ is the set of couplings of $\alpha, \beta$, that is, joint distributions on $\bbR^{d\times d}$ whose marginals are $\alpha, \beta$, respectively.
  The distance is finite as long as $\alpha$ and $\beta$ belong to the space $\cP_2(\bbR^d)$ of probability measures over $\bbR^d$ with finite second moments.
\end{definition}
Note that technically, \eqref{eq:2-wasserstein} is known as the Kantorovich formulation of optimal transport.
In general, finding the 2-Wasserstein distance between two arbitrary probability measures involves solving optimization scheme, possibly with entropic regularization \cite{peyre2019computational}.
%
% However, a well-known fact that the Wasserstein distance between two Gaussian distributions admit a closed form, called the Bures-Wasserstein (or Fréchet distance) \cite{dowson1982frechet,takatsu2011wasserstein}.
% %
% \begin{definition}[Bures-Wasserstein distance]
% The 2-Wasserstein distance between two Gaussian distributions $\cN(\ba, \bA)$ and $\cN(\bb, \bB)$ has the closed form formula
% \begin{equation}
%     \cW_2^2(\cN(\ba, \bA), \cN(\bb, \bB)) = \norm{\ba - \bb}^2_2 + \tr{\bA} + \tr{\bB} - 2\tr{\bA^{1/2}\bB\bA^{1/2}}^{1/2}.
% \end{equation}
% %
% \end{definition}

We also need the following result for the proof of our theoretical analysis.
\begin{proposition}[Proposition 3 in \cite{albergo2022building}]
  \label{prop:3-albergo2022}
  Let $p_t$ be the density of $\bx_t$ the interpolation path, defined by the dynamic in \eqref{eq:ode-eulerian}.
  Given a velocity field $\hat{v}(\bx_t, t)$, we define $\hat{p}_t$ as the solution of the initial value follow problem
  \begin{equation}
    \partial_t \hat{p}_t  = -\mathrm{div}(\hat{v}(\bx, t) \hat{p}_t), \quad \hat{p}_0 = p_0.
  \end{equation}
  Assume that $\hat v_t(\bx, t) $ is continuously differentiable in $(\bx)$ and Lipschitz in $\bx$, uniformily on $(\bx, t) \in \bbR^d\times [0,1]$ with Lipschitz constant $\hat L$.
  Then we have the following bound on the square of the $W_2$ distance between $p_1$ and $\hat p_1$
\begin{equation}
    W^2_2(p_1,\hat p_1) \leq  e^{1+2\hat L} \int_0^1 \int_{\bbR^d}  \norm{v_t(\bx_t, t) -  \hat v_t(\bx_t, t)}^2 p_t \mathrm{d}x\mathrm{d}t.
  \end{equation}
\end{proposition}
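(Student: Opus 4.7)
The plan is to build a coupling of $p_1$ and $\hat p_1$ using the characteristic flow maps of the two ODEs, and then control the trajectory deviation with a Grönwall-type argument. Let $\phi_t$ solve $\dot\phi_t(x) = v(\phi_t(x), t)$ with $\phi_0(x) = x$, and let $\hat\phi_t$ solve the analogous ODE with $\hat v$ in place of $v$. Since both $p_t$ and $\hat p_t$ start from $p_0$ and satisfy their respective continuity equations, the standard Lagrangian–Eulerian correspondence gives $(\phi_t)_\# p_0 = p_t$ and $(\hat\phi_t)_\# p_0 = \hat p_t$. Consequently the joint law of $(\phi_1(X), \hat\phi_1(X))$ for $X \sim p_0$ is an admissible transport plan between $p_1$ and $\hat p_1$, and \Cref{def:2wass} immediately yields
\[\cW_2^2(p_1, \hat p_1) \leq \bbE_{X \sim p_0}\norm{\phi_1(X) - \hat\phi_1(X)}^2.\]

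Next I would control the pointwise squared deviation $D_t(x) \egaldef \norm{\phi_t(x) - \hat\phi_t(x)}^2$ via a differential inequality. Differentiating and using $D_0(x) = 0$ gives $\dot D_t = 2\langle \phi_t - \hat\phi_t,\, v(\phi_t, t) - \hat v(\hat\phi_t, t)\rangle$. Splitting the velocity difference through the pivot $\hat v(\phi_t, t)$, the uniform $\hat L$-Lipschitzness of $\hat v$ bounds one piece by $2\hat L D_t$ after Cauchy–Schwarz, while Young's inequality $2ab \leq a^2 + b^2$ tames the remaining inner product against $v(\phi_t, t) - \hat v(\phi_t, t)$. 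Collecting terms produces
\[\dot D_t(x) \leq (1 + 2\hat L)\, D_t(x) + \norm{v(\phi_t(x), t) - \hat v(\phi_t(x), t)}^2,\]
and Grönwall's lemma at $t = 1$, together with the uniform bound $e^{(1+2\hat L)(1-s)} \leq e^{1+2\hat L}$ for $s \in [0,1]$, gives
\[D_1(x) \leq e^{1+2\hat L} \int_0^1 \norm{v(\phi_s(x), s) - \hat v(\phi_s(x), s)}^2 \, \mathrm{d}s.\]

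The final step is to integrate the above in $x$ against $p_0$ and apply the change of variables $y = \phi_s(x)$, exploiting $(\phi_s)_\# p_0 = p_s$ (and swapping the order of integration by Fubini, since the integrand is nonnegative). This converts the double integral into $\int_0^1 \int_{\bbR^d} \norm{v_t(y,t) - \hat v_t(y,t)}^2 \, p_t(y)\, \mathrm{d}y\, \mathrm{d}t$, delivering the claimed bound. The main technical point is justifying the characteristic-flow framework: Lipschitzness and continuous differentiability of $\hat v$ give a well-defined $\hat\phi_t$ and the differentiability needed for the Grönwall step, but one also needs enough regularity on $v$ to produce a classical flow $\phi_t$ and to legitimize the pushforward identity $(\phi_s)_\# p_0 = p_s$; with those hypotheses in force, the constants $(1 + 2\hat L)$ and $e^{1 + 2\hat L}$ fall out exactly from the Young-inequality split.
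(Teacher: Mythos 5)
Your proof is correct, and it reproduces, up to notation, the standard characteristic-flow-coupling plus Grönwall argument used in the source this paper cites: the paper itself gives no proof of this proposition — it quotes it as Proposition 3 of \cite{albergo2022building} — so the only thing to check is that your blind argument actually works, and it does. You correctly pivot the velocity gap through $\hat v(\phi_t,t)$ (rather than $v(\hat\phi_t,t)$) so that the Lipschitz bound on $\hat v$ — the only regularity actually assumed — absorbs one term, Young's inequality handles the other, and the constants $(1+2\hat L)$ and $e^{1+2\hat L}$ drop out from Grönwall with $D_0 = 0$; your concluding remark that the argument tacitly needs enough regularity on $v$ to produce $\phi_t$ with $(\phi_t)_\# p_0 = p_t$ is the right caveat and is implicit in how $p_t$ is defined in the statement.
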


% Finally, we restate \cref{thm:wasserstein-bound} with a slight modification to the constant of the bound. We remark that this does not change the interpretation of the theorem as stated in the last paragraph of \cref{ssec:theory}, but just rather to compact the constant in our main result. 
%

%
% \subsection{}
\begin{proof}[Proof of \cref{thm:wasserstein-bound}]
  By \cref{def:2wass}, we have
  \begin{align}
    \cW_2^2(p_0, \hat{p}_0) &\egaldef \min_{\pi \in \Pi(\alpha, \beta)} \int_{\bbR^{d\times d}} \norm{\bx_0 - \hat{\bx}_0}^2 d\pi(\bx_0, \hat{\bx}_0) \\
                            &= \min_{\pi \in \Pi(\alpha, \beta)} \int_{\bbR^{d\times d}} \norm{g_{\tau}(\bz_0) - \Delta_{f_{\phi},g_{\tau}}(\bx) - \hat{\bx}_0}^2 d\pi(g_{\tau}(\bz_0) - \Delta_{f_{\phi},g_{\tau}}(\bx), \hat{\bx}_0) \\
                            &= \norm{\Delta_{f_{\phi},g_{\tau}}(\bx)}^2 + \min_{\pi \in \Pi(\bar{\alpha}, \beta)} \int_{\bbR^{d\times d}} \norm{g_{\tau}(\bz_0) - g_{\tau}(\hat{\bz}_0)}^2 d\pi(g_{\tau}(\bz_0), g_{\tau}(\hat{\bz}_0)).
  \end{align}
  where we use the assumption that the stochastic reconstruction $g_{\tau}(\bz_0)$ is a distance $\Delta_{f_{\phi},g_{\tau}}(\bx)$ away to the real data sample $\bx_0$, and $\hat{\bx}_0 \egaldef  g_{\tau}(\hat{\bz}_0)$.
  This implies
  \begin{equation}
      \cW_2^2(p_0, \hat{p}_0) \leq \norm{\Delta_{f_{\phi},g_{\tau}}(\bx)}^2 + L^2_{g_{\tau}} \cW_2^2(q^{\phi}_0, \hat{q}^{\phi}_0) %\left[\min_{\pi} \int_{\bbR^{d/h\times d/h}} \norm{\bz_0 - \hat{\bz}_0}^2 d\pi(\bz_0, \hat{\bz}_0) \right]
  \end{equation}
  by Lipschitz continuity of $g_{\tau}$.
  Finally, using \cref{prop:3-albergo2022} to bound $\cW_2^2(q^{\phi}_0, \hat{q}^{\phi}_0)$, we have
  \begin{equation}
    \cW_2^2(p_0, \hat{p}_0) \leq \norm{\Delta_{f_{\phi},g_{\tau}}(\bx)}^2 + L^2_{g_{\tau}} e^{1+2\hat L} \int_0^1 \int_{\bbR^{d/h}}  \norm{v_t(\bz_t, t) -  \hat v_t(\bz_t, t)}^2 q^{\phi}_t \mathrm{d}z\mathrm{d}t.
  \end{equation}
  with a small remark that although in \cite{albergo2022building}, the author use $p_0$ to denote the base (noise), and $p_1$ target (data) distributions, this does not prevent the direct usage of their result in our proof.

\end{proof}

\section{Analysis of fixed-steps ODE solvers}
\label{sec:ode_analysis}
\begin{figure}[t]
\centering
\subfloat[FID vs. Steps]{
    \includegraphics[width=0.46\linewidth]{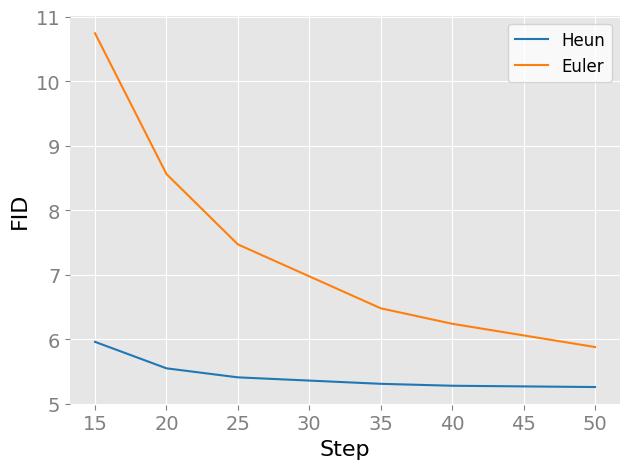}
    \label{fig:solver_fid}
}
~
\subfloat[Time vs. Steps]{
    \includegraphics[width=0.46\linewidth]{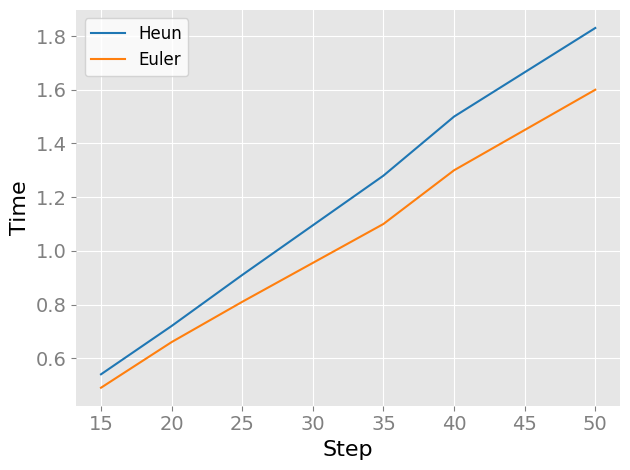}
    \label{fig:solver_time}
}
\caption{Fixed-step ODE solvers on CelebA-HQ 256.}
    \label{fig:solver_analysis}
\vspace{-2mm}
\end{figure}

We perform fixed-step ODE solvers with various numbers of steps, then measure FID and sampling time on each setting. As shown in \cref{fig:solver_analysis}, the Heun solver consistently obtains better FID scores across different settings, but its sampling time is slightly higher than that of the Euler algorithm.

\section{Algorithms}
For unconditional image generation, Training and sampling procedure are included in \cref{alg:utrain} and \cref{alg:usample}. 

Compared to the unconditional model, the training procedure for class-conditional image generation is basically similar but includes an extra class condition in the estimation function $v_\theta(\bz, \bc, t)$ in \cref{alg:ctrain}. The sampling process with the proposed classifier-free 
velocity field is also portrayed in \cref{alg:csample}.

\begin{minipage}[t]{0.48\textwidth}
    \begin{algorithm}[H]
\caption{Unconditional training}\label{alg:utrain}
\kwInput{data $p_0$, VAE encoder $\cE$, velocity estimator $v_\theta$, learning rate $\eta $}\;
\Repeat{convergence}{
    $\bx_0 \sim p_0$, $\bz_0 \gets \mathcal{E}(\bx_0)$\; 
    $\bz_1 \gets \cN(0, \bI)$, $t \sim \cU[0, 1]$\;
    $\bz_t \gets (1 - t)\bz_0 + t\bz_1$\;
    $l \gets \norm{\bz_1 - \bz_0 - v_\theta\left(\bz_t, t\right) }^2_2$\;
    $\theta \gets \theta-\eta \nabla_{\theta} l$\;
}
\end{algorithm}
\end{minipage}
\hfill
\begin{minipage}[t]{0.5\textwidth}
    \begin{algorithm}[H]
\caption{Unconditional Euler sampling}\label{alg:usample}
\kwInput{initial noise $\bz_1$, VAE decoder $\cD$, velocity estimator $v_\theta$, steps $N$}\;
$t_n \gets \frac{n}{N}$\;
\For{$n = N-1$ \KwTo $0$}{
    $z_{t_n} \gets z_{t_{n+1}} + (t_n - t_{n+1}) v_\theta(\bz_{t_{n+1}}, t_{n+1})$
}
$\bx_0 \gets \cD(\bz_0)$\;
\kwOutput{$\bx_0$}\;
\end{algorithm}
\end{minipage}

\begin{minipage}[t]{0.48\textwidth}
    \begin{algorithm}[H]
\caption{Class-conditional training}\label{alg:ctrain}
\kwInput{data $p_0$, VAE encoder $\cE$, velocity estimator $v_\theta$, learning rate $\eta $, probability of unconditional training $p_u$}\;
\Repeat{convergence}{
    $\bx_0, \bc \sim p_0$, $\bz_0 \gets \mathcal{E}(\bx_0)$\;
    $\bc \gets \emptyset$ with probability $p_u$\;
    $\bz_1 \gets \cN(0, \bI)$, $t \sim \cU[0, 1]$\;
    $\bz_t \gets (1 - t)\bz_0 + t\bz_1$\;
    $l \gets \norm{\bz_1 - \bz_0 - v_\theta\left(\bz_t, \bc, t\right) }^2_2$\;
    $\theta \gets \theta-\eta \nabla_{\theta} l$\;
}
\end{algorithm}
\end{minipage}
\hfill
\begin{minipage}[t]{0.5\textwidth}
    \begin{algorithm}[H]
\caption{Class-conditional Euler sampling}\label{alg:csample}
\kwInput{initial noise $\bz_1$, VAE decoder $\cD$, velocity estimator $v_\theta$, steps $N$, class $c$, guidance $\gamma$}\;
$t_n \gets \frac{n}{N}$\;
\For{$n = N-1$ \KwTo $0$}{
    $v_{c} \gets v_\theta(\bz_{t_{n+1}}, c, t_{n+1})$\;
    $v_{u} \gets v_\theta(\bz_{t_{n+1}}, \emptyset, t_{n+1})$\;
    $\tilde{v}_c \gets v_{u} + \gamma(v_{c} - v_{u})$
    $z_{t_n} \gets z_{t_{n+1}} + (t_n - t_{n+1})\tilde{v}_c$
}
$\bx_0 \gets \cD(\bz_0)$\;
\kwOutput{$\bx_0$}\;
\end{algorithm}
\end{minipage}

\section{Additional experiments}

\subsection{Class-conditional image generation}
More comparisons of existing methods on ImageNet $256 \times 256$ are given in \cref{tab:full_imagenet}. It is important to emphasize that our approach, utilizing the DiT variant and the proposed classifier-free velocity guidance, has consistently demonstrated superior performance compared to both the original LDM and DiT-B/2 alternatives, despite having smaller model sizes. Additionally, our method exhibits a notable level of competitiveness when compared to established methods in the field.
\begin{table}[t]
    \centering
    \begin{tabular}{l c c c}
        \toprule
    Model & FID$\downarrow$ & Recall$\uparrow$ & Params \\
    \midrule
    Ours (ADM) & 16.71 & 0.56 & 387M \\
    + cfg=1.25 & 8.58 & 0.50 & 387M \\
    % Ours (DiT B/2) & 24.99 & 0.37 & 130M \\
    % + cfg=1.5 & 5.76 & 0.34 & 130M \\
    Ours (DiT B/2) & 20.38 & 0.56 & 130M \\
    + cfg=1.25 & 8.77 & 0.48 & 130M \\
    + cfg=1.5 & 4.46 & 0.42 & 130M \\
    \midrule
    LDM-8 \cite{rombach2022high} & 15.51 & 0.63 & 395M \\
    LDM-8-G & 7.76 & 0.35 & 506M \\
    LDM-4 & 10.56 & 0.62 & 400M \\
    LDM-4-G (cfg=1.5) & 3.60 & 0.48 & 400M \\
    \midrule
    DiT-B/2 \cite{Peebles2022DiT} & 43.47 & - & 130M \\
    DiT-XL/2 & 9.62 & 0.67 & 675M \\
    DiT-XL/2-G (cfg=1.25) & 3.22 & 0.62 & 675M \\
    \midrule
    ADM \cite{dhariwal2021diffusion} & 10.94 & 0.63 & 554M \\
    ADM-U & 7.49 & 0.63 & 554M \\
    ADM-G & 4.59 & 0.52 & 608M \\
    ADM-G, ADM-U & 3.94 & 0.53 & - \\
    \midrule
    CDM \cite{ho2022cdm} & 4.88 & - & - \\
    \midrule
    ImageBART \cite{esser2021imagebart} & 7.44 & - & 3.5B \\
    \midrule
    VQGAN+T \cite{esser2021taming} & 5.88 & - & 1.3B \\
    MaskGIT \cite{chang2022maskgit} & 6.18 & 0.51 & 227M \\
    RQ-Transformer \cite{lee2022autoregressive} & 3.80 & - & 3.8B \\
    \midrule
    BigGan-deep \cite{brock2018large} & 6.95 & 0.28 & 160M \\
    StyleGAN-XL \cite{sauer2022stylegan} & 2.30 & 0.53 & 166M \\
    \bottomrule
    \end{tabular}
    \vspace{2mm}
    \caption{Class-conditional image generation on ImageNet $256 \times 256$}
    \label{tab:full_imagenet}
\end{table}

\begin{figure}[t]
    \centering
    \includegraphics[width=\linewidth]{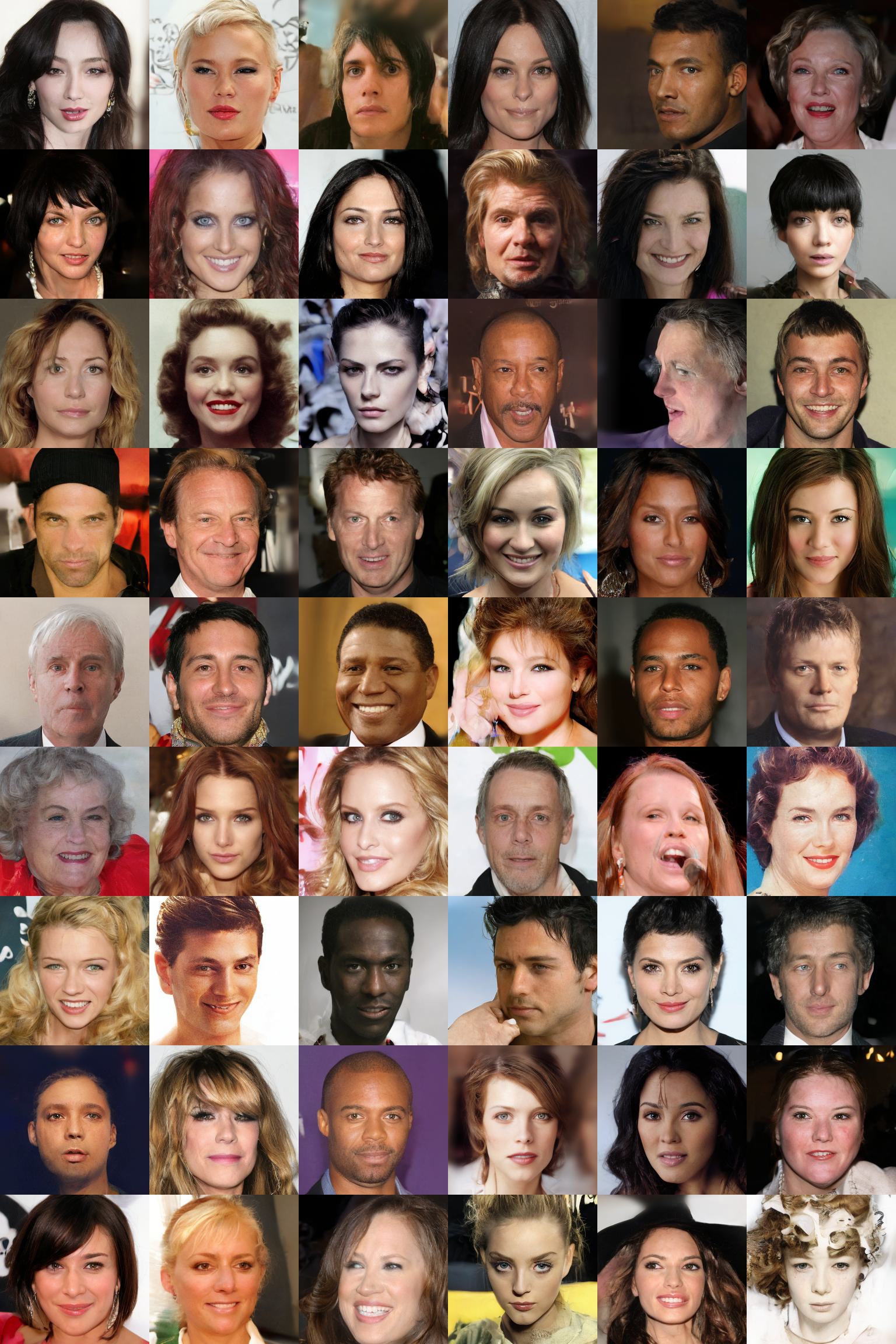}
    % \vspace{-1mm}
    \caption{Non-curated generated samples of CelebA-HQ 256.}
    \label{fig:more_celeb_samples}
\end{figure}

\begin{figure}[t]
    \centering
    \includegraphics[width=\linewidth]{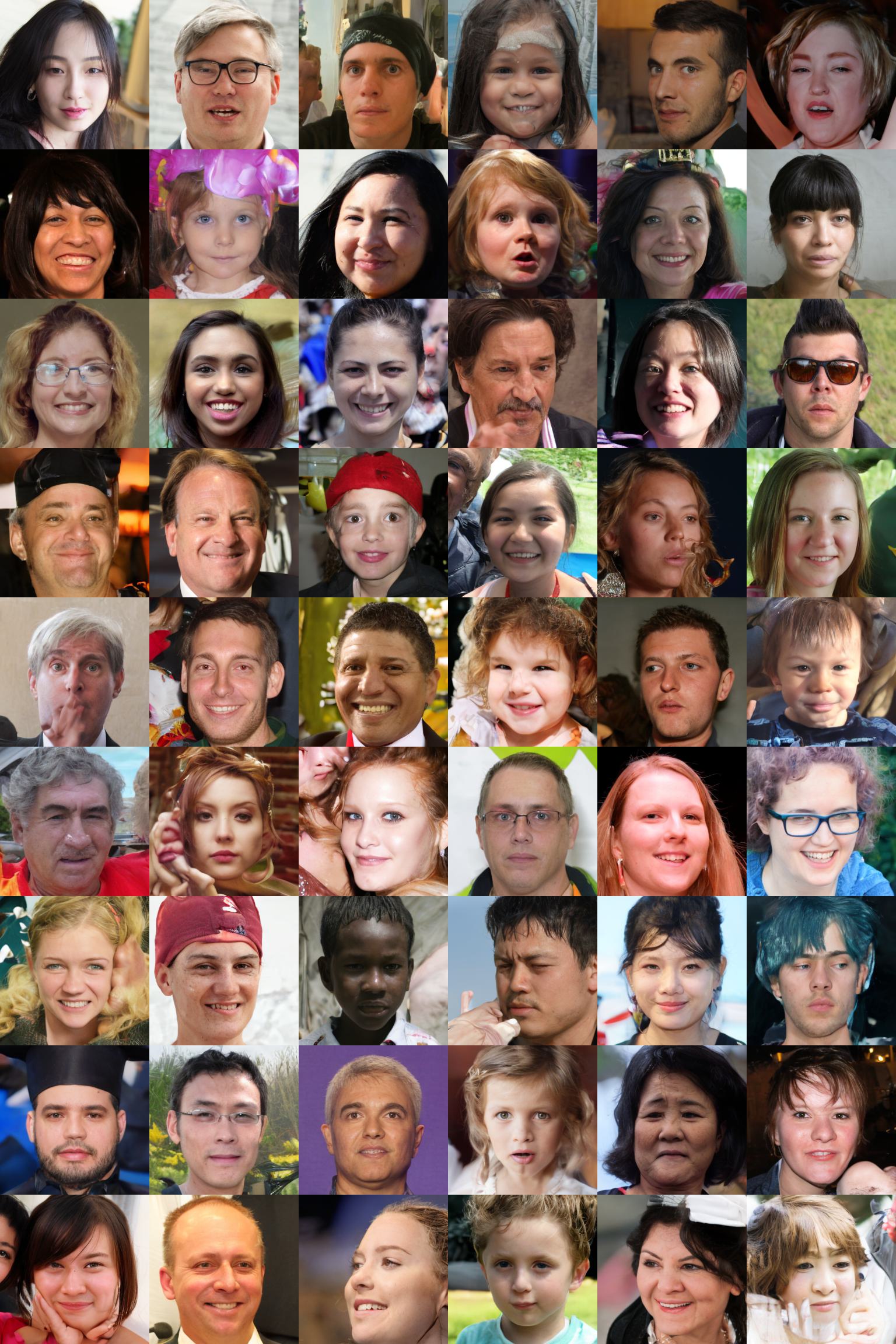}
    % \vspace{-1mm}
    \caption{Non-curated generated samples of FFHQ 256.}
    \label{fig:ffhq_samples}
\end{figure}

\begin{figure}[t]
    \centering
    \includegraphics[width=\linewidth]{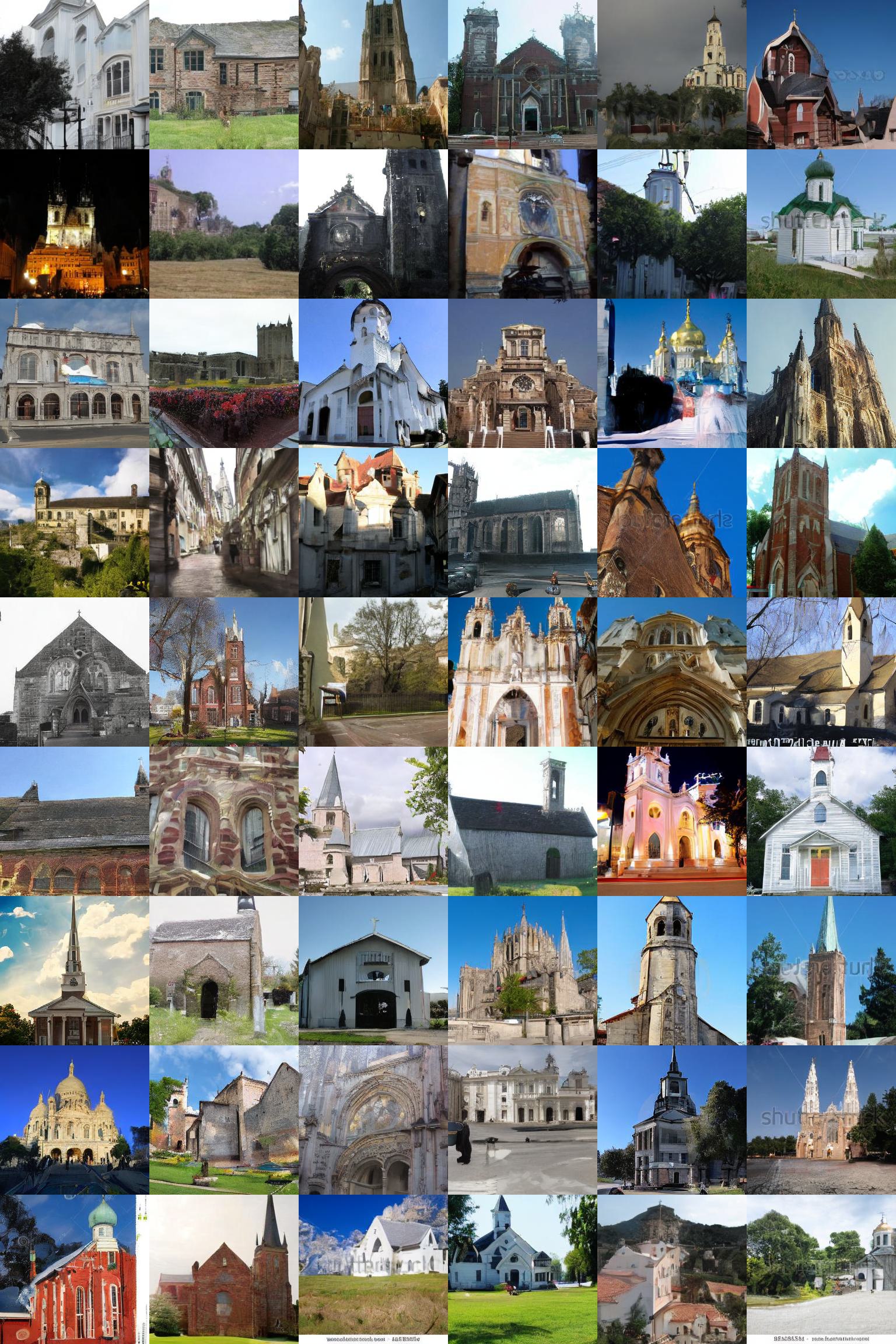}
    % \vspace{-1mm}
    \caption{Non-curated generated samples of LSUN-Church.}
    \label{fig:church_samples}
\end{figure}

\begin{figure}[t]
    \centering
    \includegraphics[width=\linewidth]{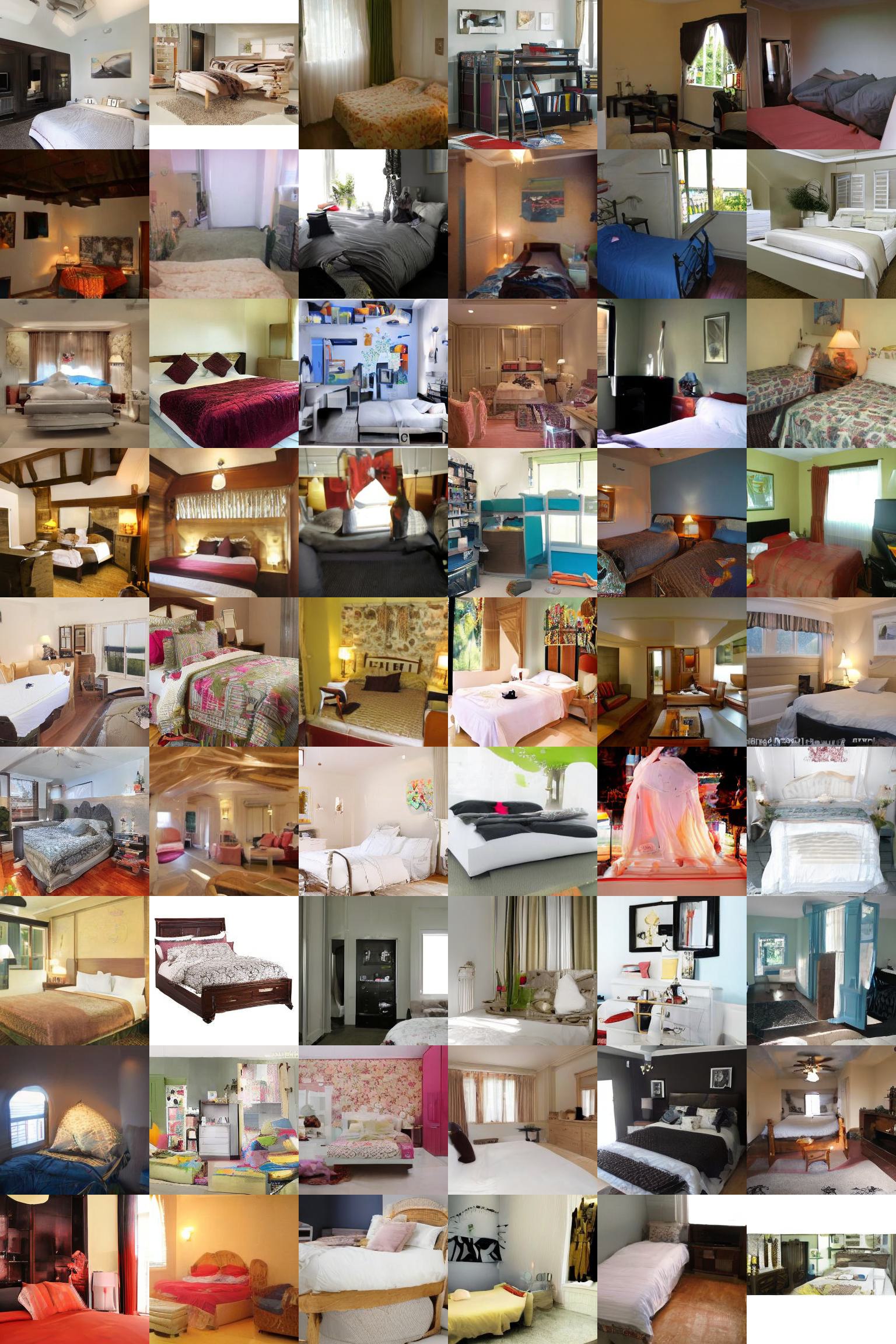}
    % \vspace{-1mm}
    \caption{Non-curated generated samples of LSUN-Bedroom.}
    \label{fig:bed_samples}
\end{figure}

\begin{figure}[t]
    \centering
    \includegraphics[width=\linewidth]{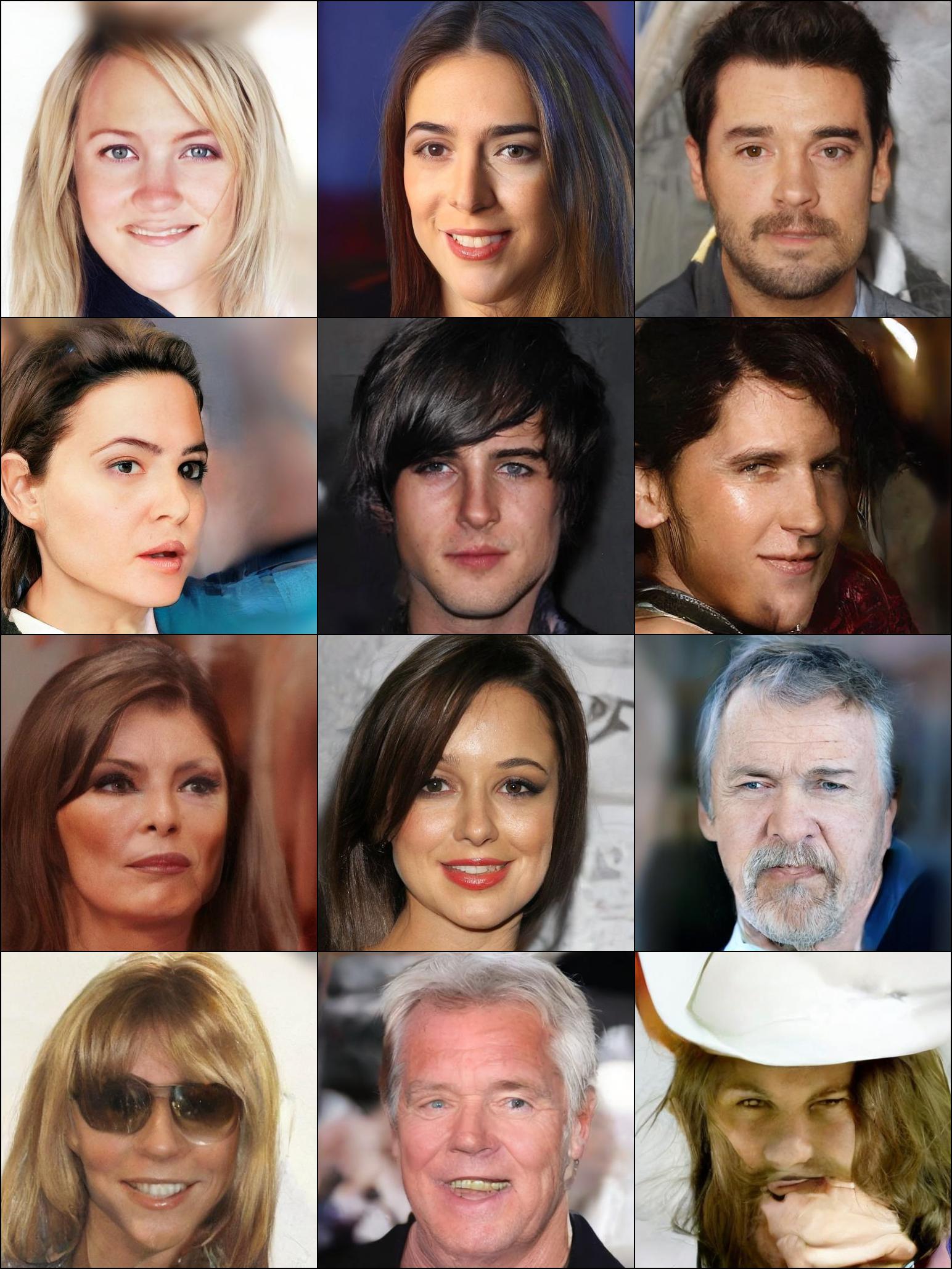}
    % \vspace{-1mm}
    \caption{Non-curated generated samples of CelebA-HQ 512.}
    \label{fig:celeb512_samples}
\end{figure}

% \begin{figure}[t]
%     \centering
%     \includegraphics[width=\linewidth]{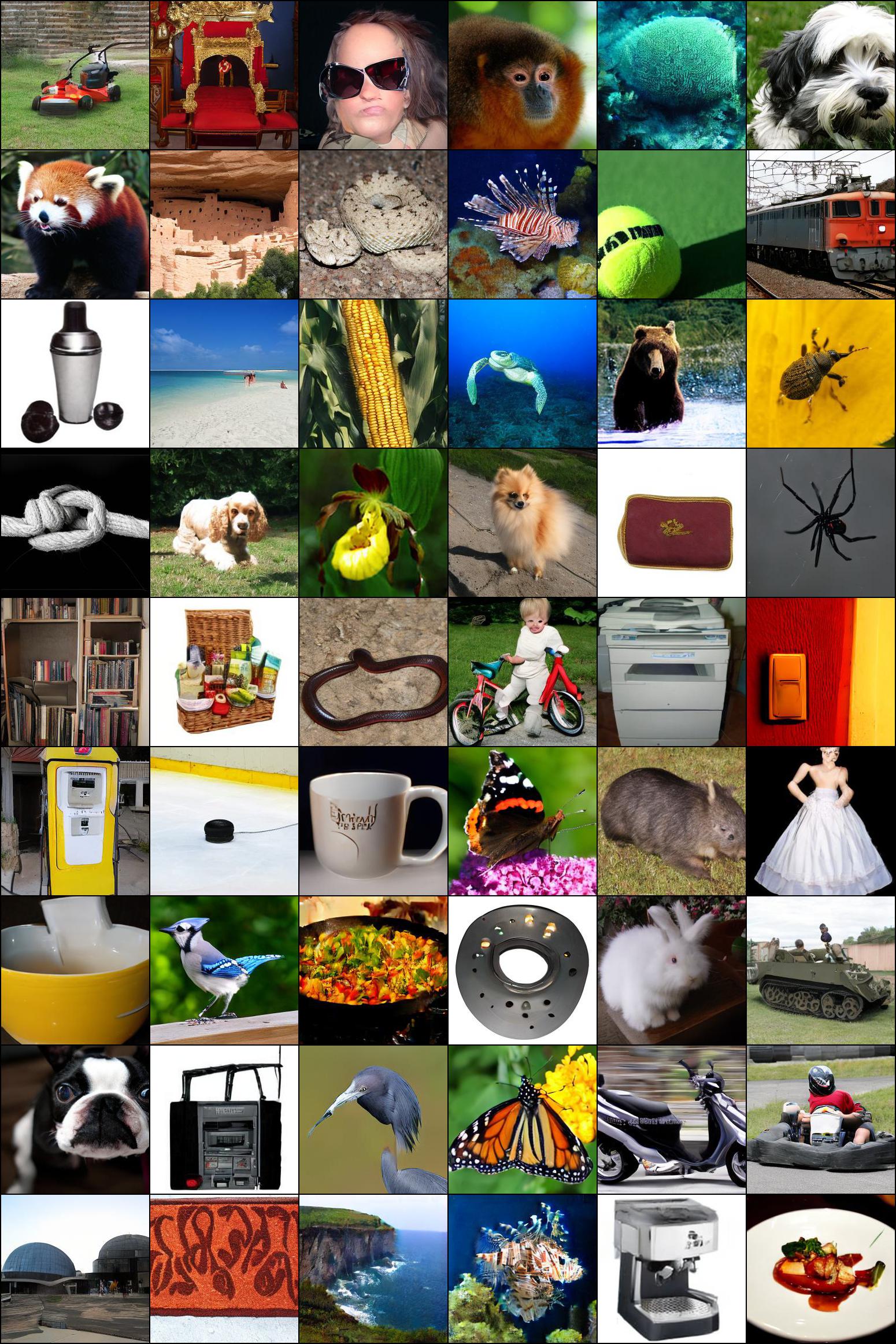}
%     % \vspace{-1mm}
%     \caption{Non-curated generated samples of guided ImageNet DiT B/2 ($\text{cfg}=1.5$)}
%     \label{fig:imnet_samples}
% \end{figure}

\begin{figure}[t]
\centering
\subfloat[Class: 88 - 'Macaw`]{
    \includegraphics[width=0.48\linewidth]{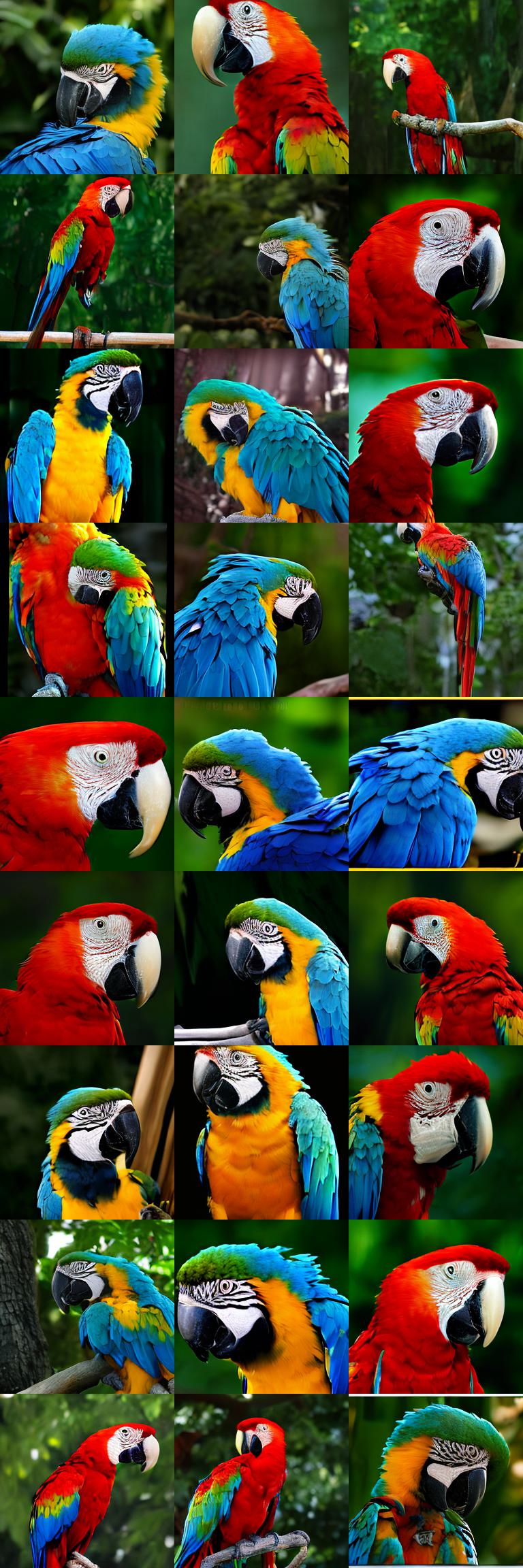}
}
~
\subfloat[Class: 417 - 'Balloon`]{
    \includegraphics[width=0.48\linewidth]{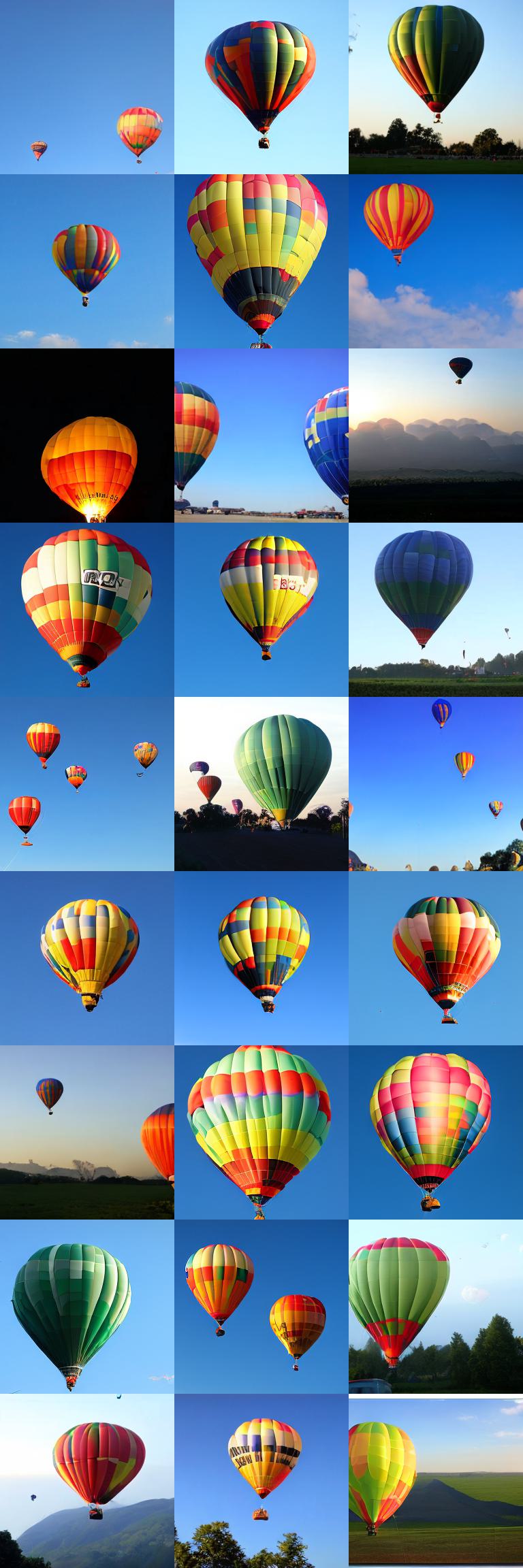}
}

\caption{Non-curated samples of our DiT-B/2 variant on ImageNet (Guidance scale: 4.0).}
    \label{fig:imnet_cfg4.0_00}
\vspace{-2mm}
\end{figure}

\begin{figure}[t]
\centering
\subfloat[Class: 327 - ‘Starfish’]{
    \includegraphics[width=0.48\linewidth]{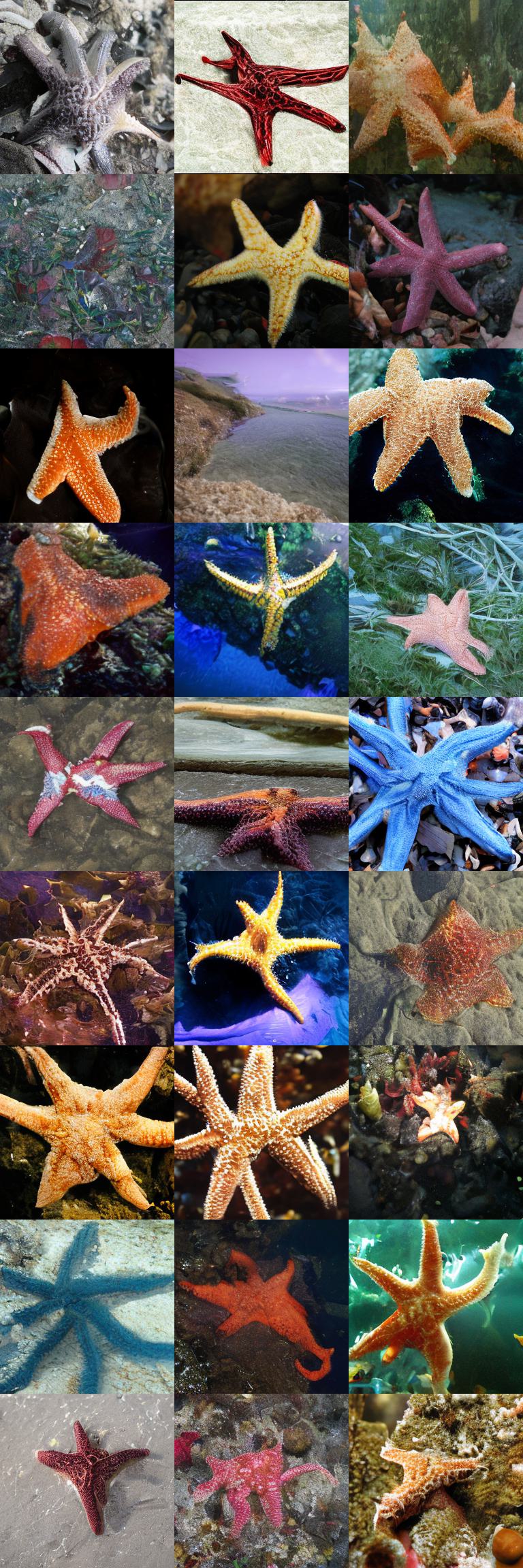}
}
~
\subfloat[Class: 947 - `Mushroom']{
    \includegraphics[width=0.48\linewidth]{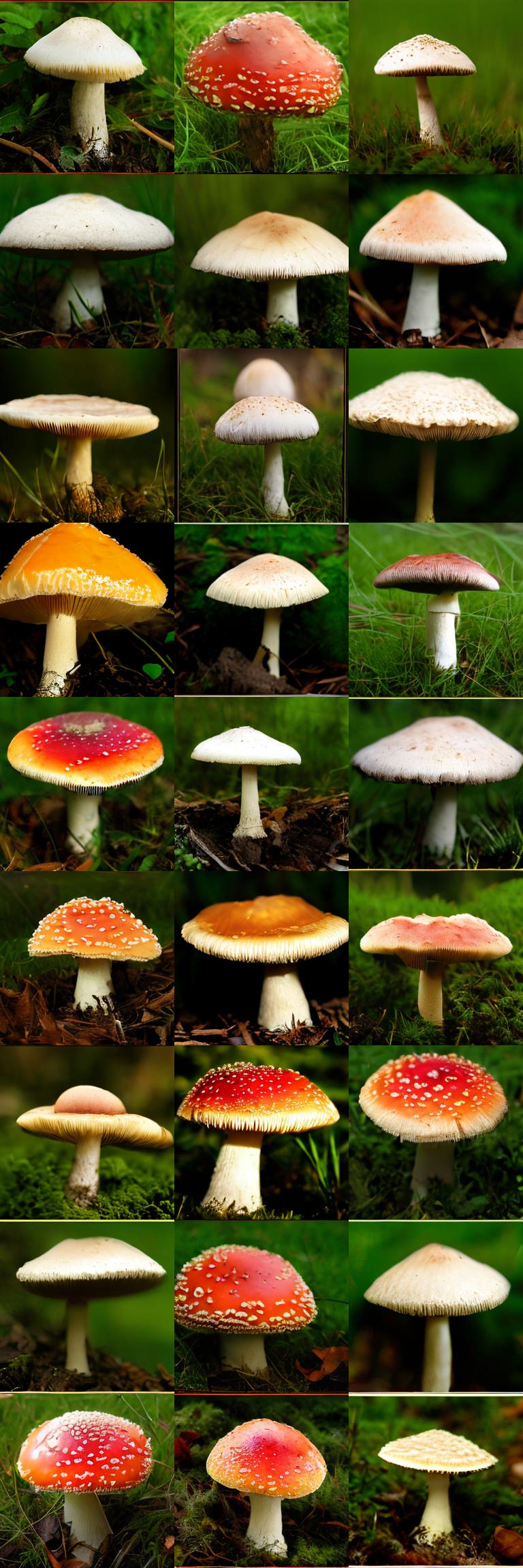}
}
\caption{Non-curated samples of our DiT-B/2 variant on ImageNet (Guidance scale: 4.0).}
    \label{fig:imnet_cfg4.0_01}
\vspace{-2mm}
\end{figure}

\begin{figure}[t]
\centering
\subfloat[Class: 396 - `Lionfish']{
    \includegraphics[width=0.48\linewidth]{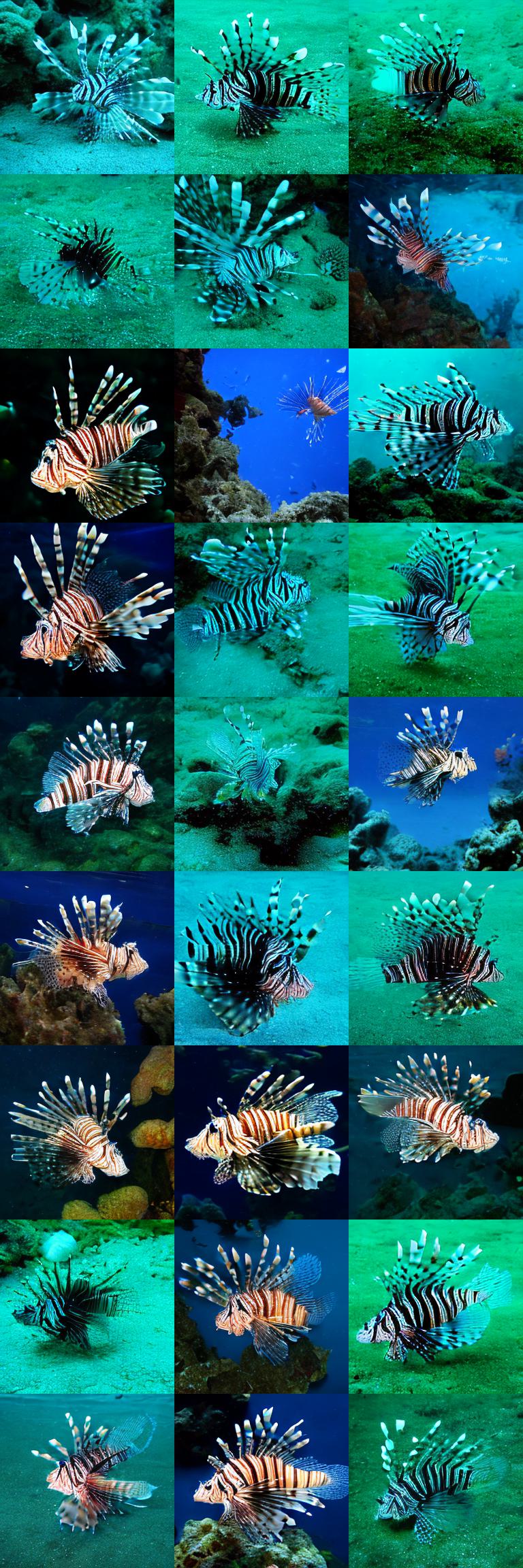}
}
~
\subfloat[Class: 33 - `Loggerhead Turtle']{
    \includegraphics[width=0.48\linewidth]{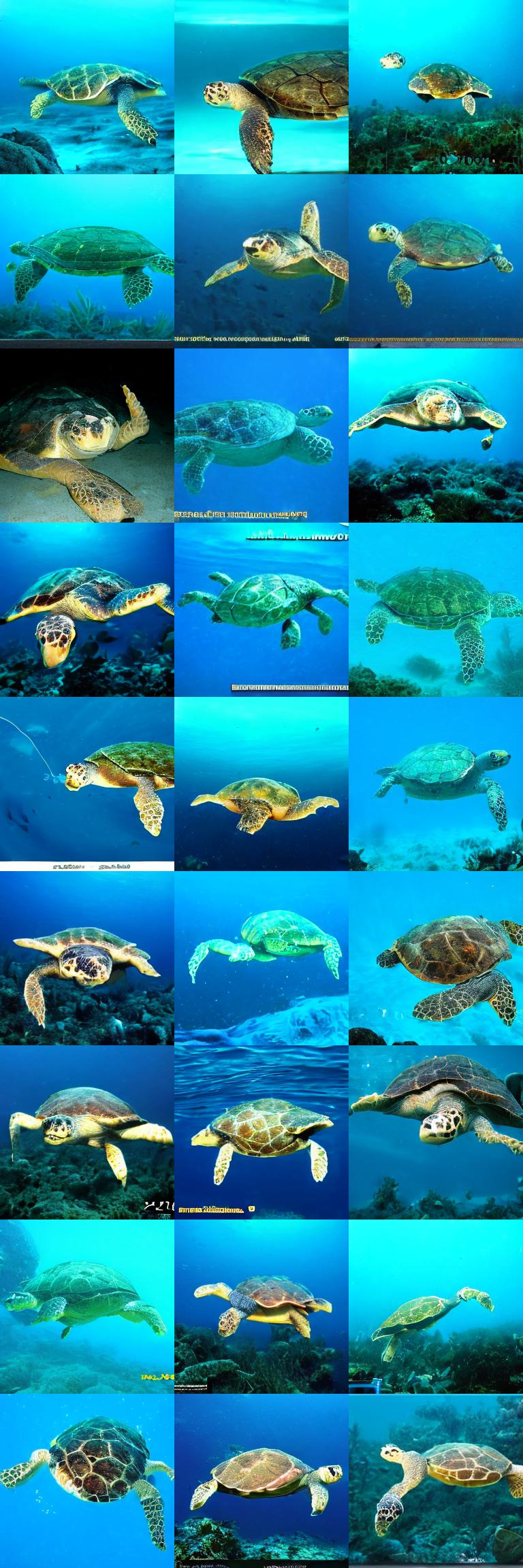}
}
\caption{Non-curated samples of our DiT-B/2 variant on ImageNet ($\text{cfg}=4.0$).}
    \label{fig:imnet_cfg4.0_02}
\vspace{-2mm}
\end{figure}

\begin{figure}[t]
\centering
\subfloat[Class: 449 - ‘Boathouse’]{
    \includegraphics[width=0.48\linewidth]{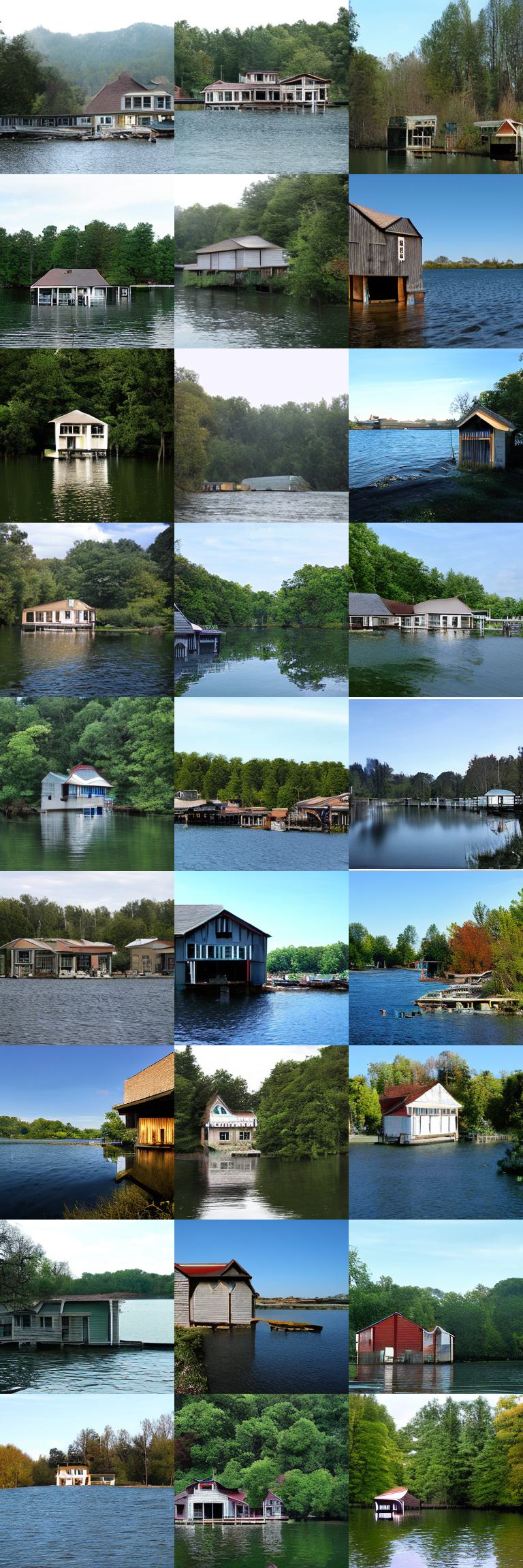}
}
~
\subfloat[Class: 972 - ‘Cliff’]{
    \includegraphics[width=0.48\linewidth]{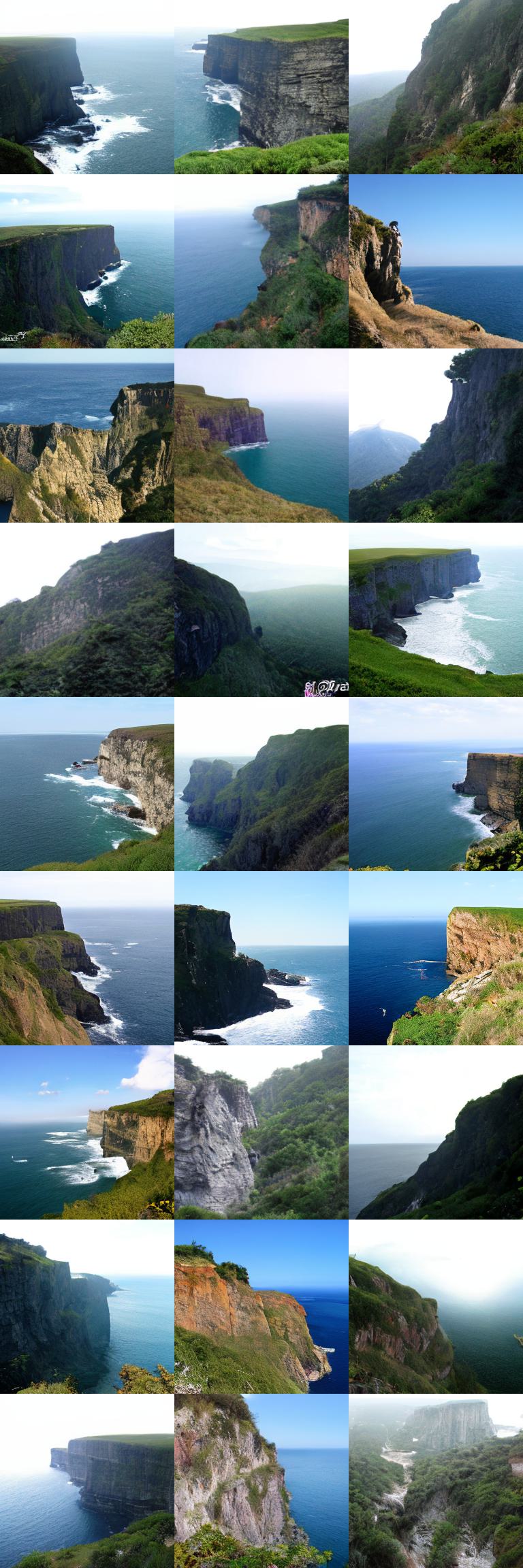}
}

\caption{Non-curated samples of our DiT-B/2 variant on ImageNet ($\text{cfg}=4.0$).}
    \label{fig:imnet_cfg4.0_03}
\vspace{-2mm}
\end{figure}

\begin{figure}[t]
\centering
\subfloat[Class: 207 - `Golden retriever']{
    \includegraphics[width=0.48\linewidth]{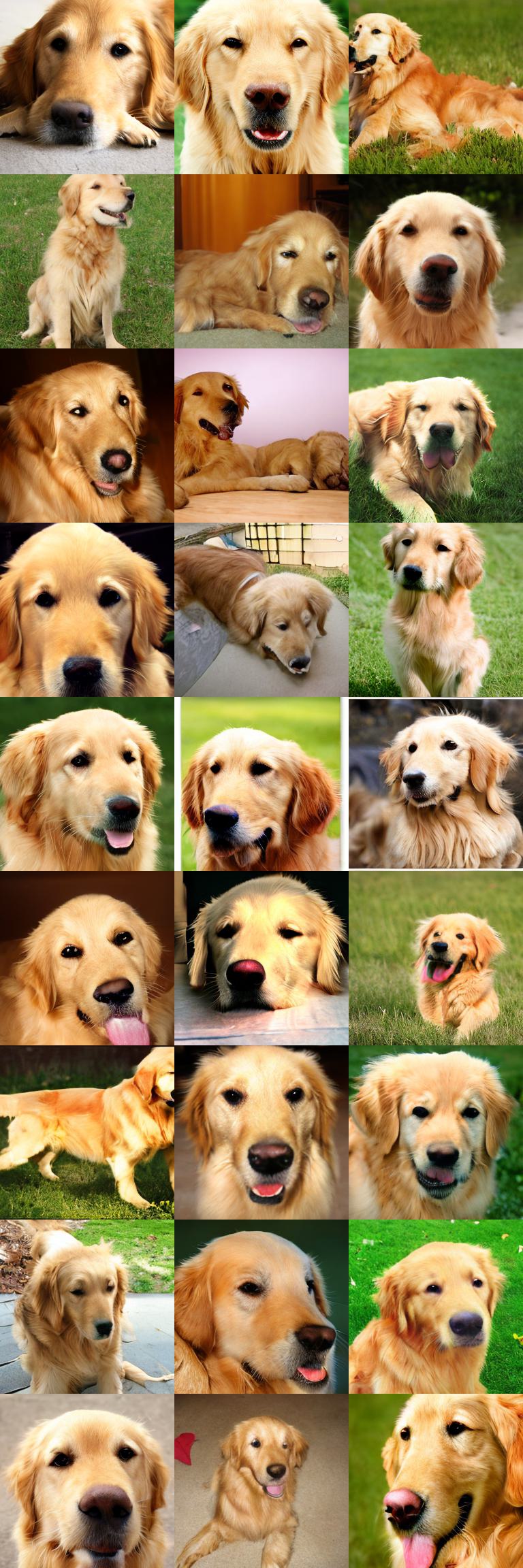}
}
~
\subfloat[Class: 279 - `Arctic fox']{
    \includegraphics[width=0.48\linewidth]{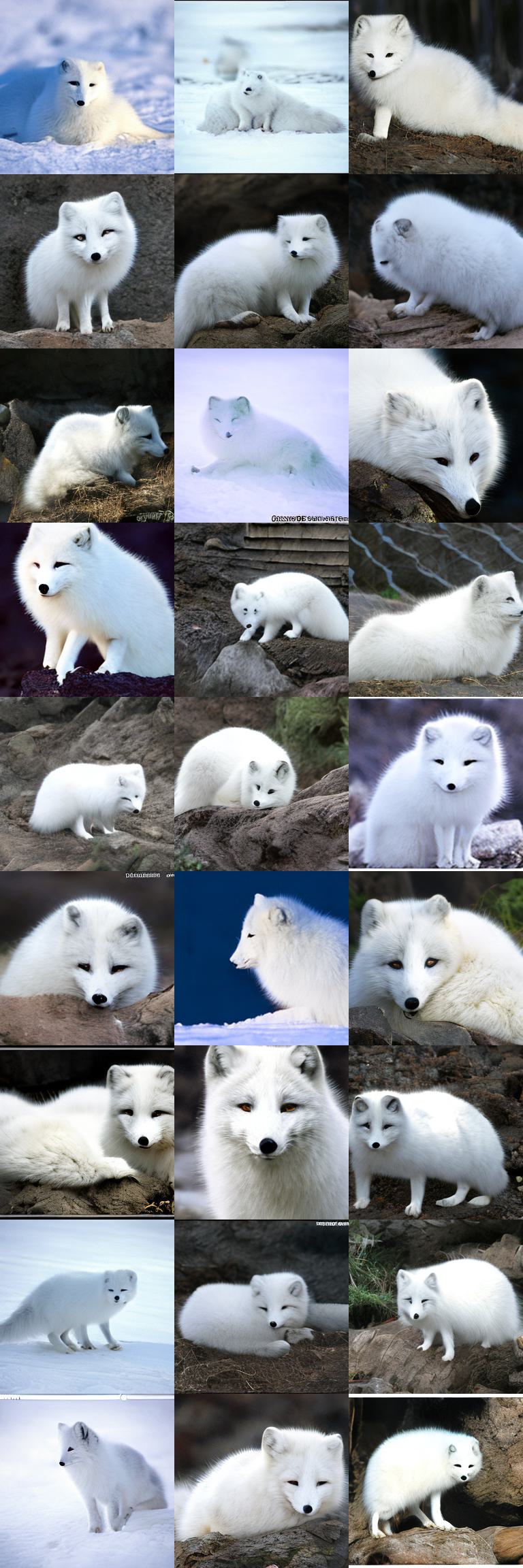}
}

\caption{Non-curated samples of our DiT-B/2 variant on ImageNet ($\text{cfg}=4.0$).}
    \label{fig:imnet_cfg4.0_04}
\vspace{-2mm}
\end{figure}

\begin{figure}[t]
\centering
\subfloat[Class: 89 - `Sulphur-crested cockatoo']{
    \includegraphics[width=0.48\linewidth]{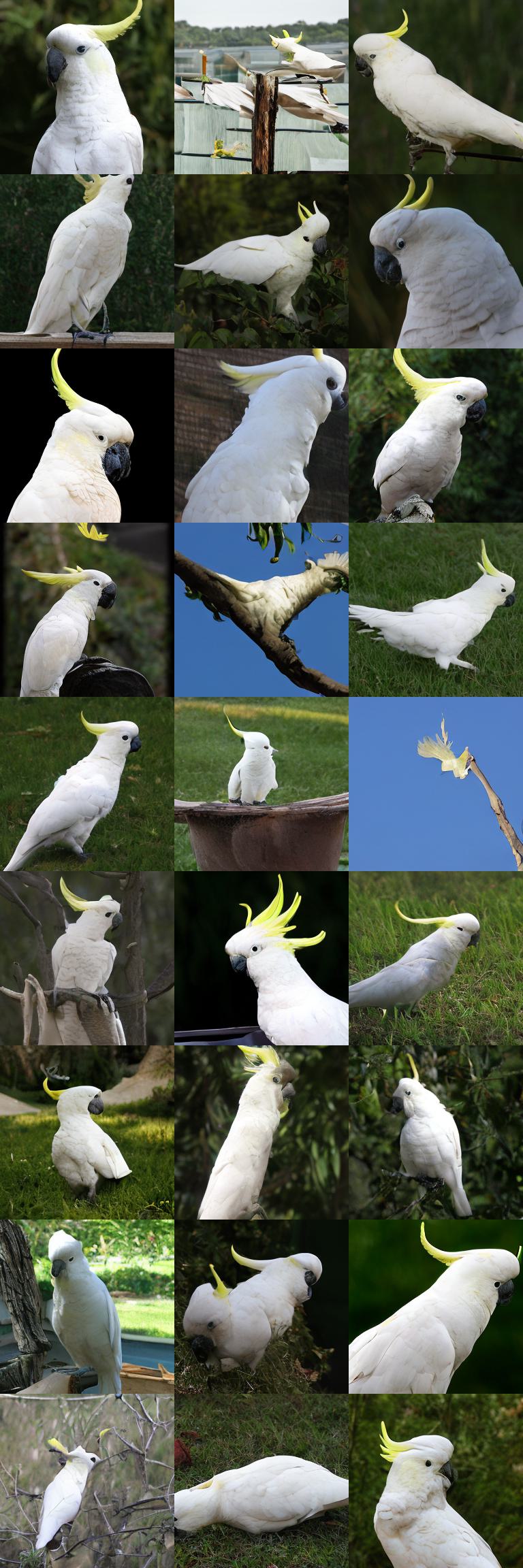}
}
~
\subfloat[Class: 980 - `Volcano']{
    \includegraphics[width=0.48\linewidth]{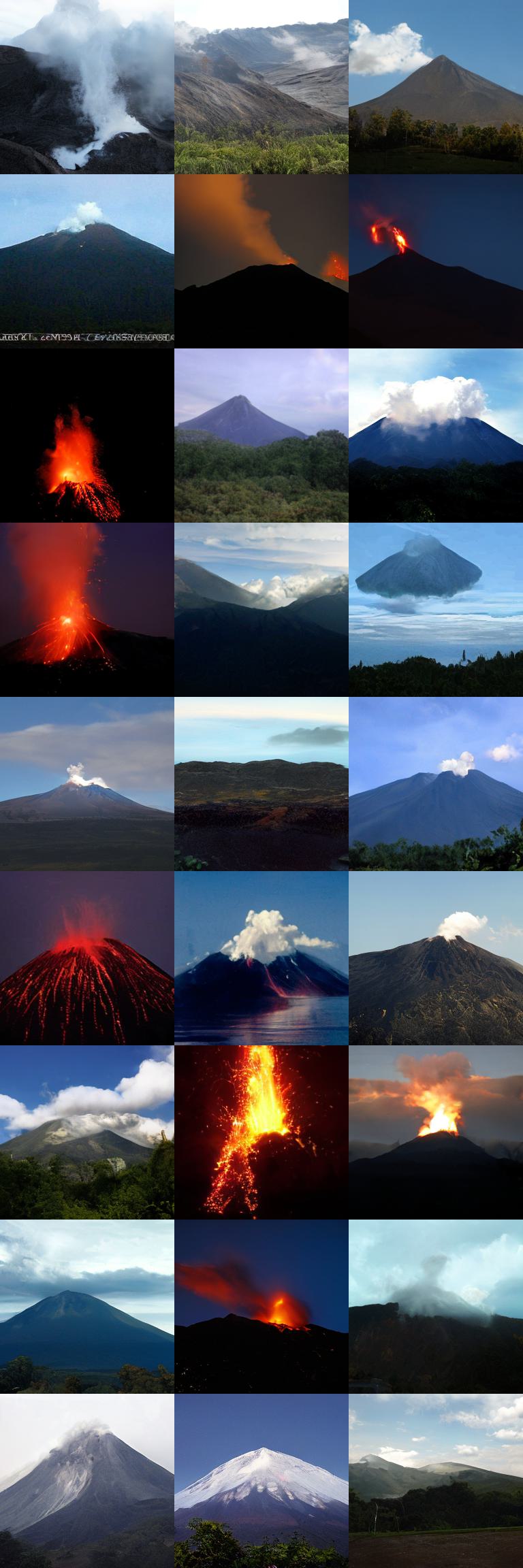}
}
\caption{Non-curated samples of our DiT-B/2 variant on ImageNet ($ \text{cfg}=2.0$).}
    \label{fig:imnet_cfg2.0a}
\vspace{-2mm}
\end{figure}

\begin{figure}[t]
\centering
\subfloat[Class: 973 - `Coral reef']{
    \includegraphics[width=0.48\linewidth]{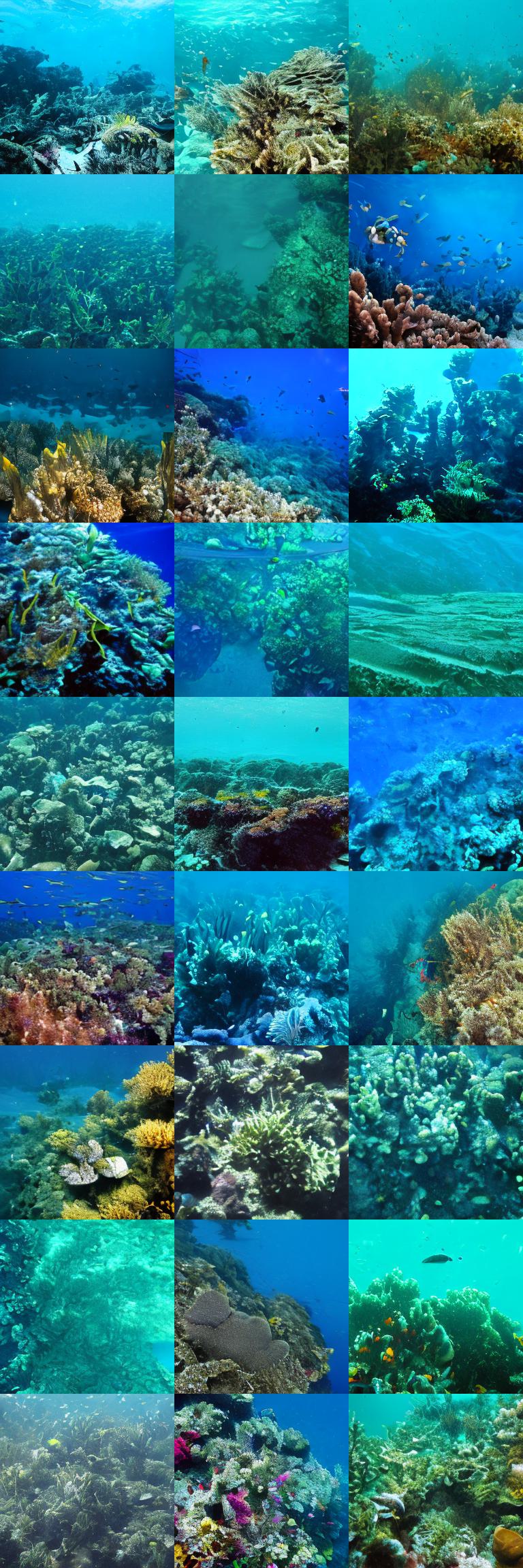}
}
~
\subfloat[Class: 360 - `Otter']{
    \includegraphics[width=0.48\linewidth]{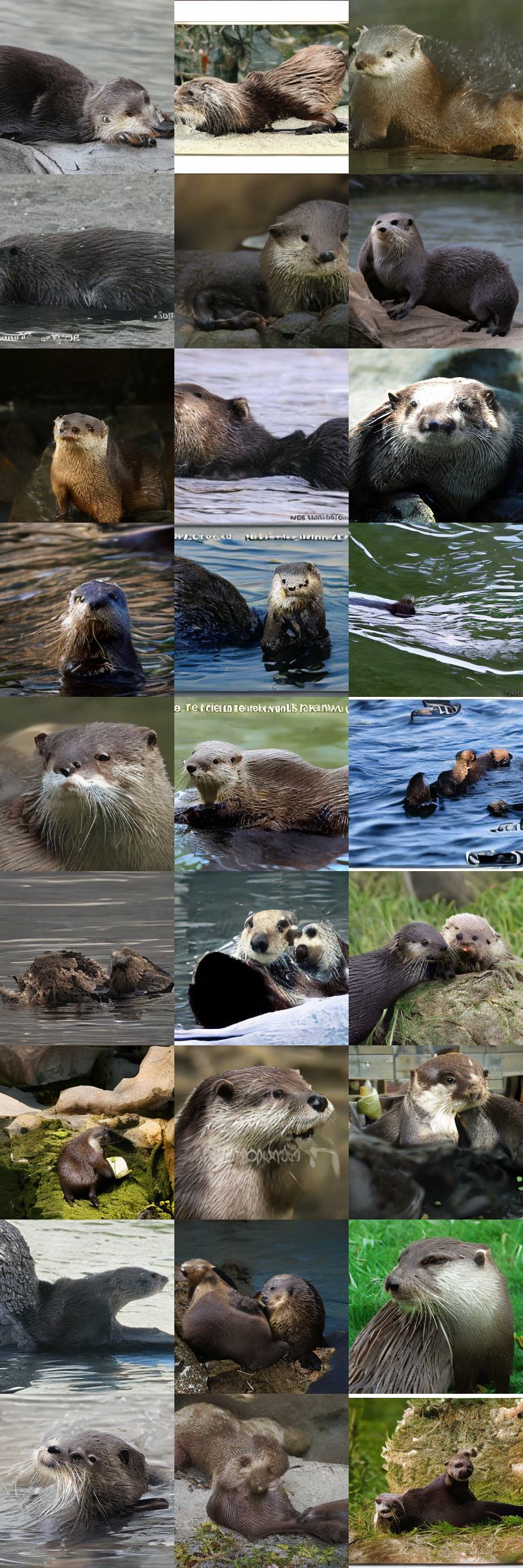}
}
\caption{Non-curated samples of our DiT-B/2 variant on ImageNet ($ \text{cfg}=2.0$).}
    \label{fig:imnet_cfg2.0b}
\vspace{-2mm}
\end{figure}

\begin{figure}[t]
\centering
\subfloat[Class: 937 - `Broccoli']{
    \includegraphics[width=0.48\linewidth]{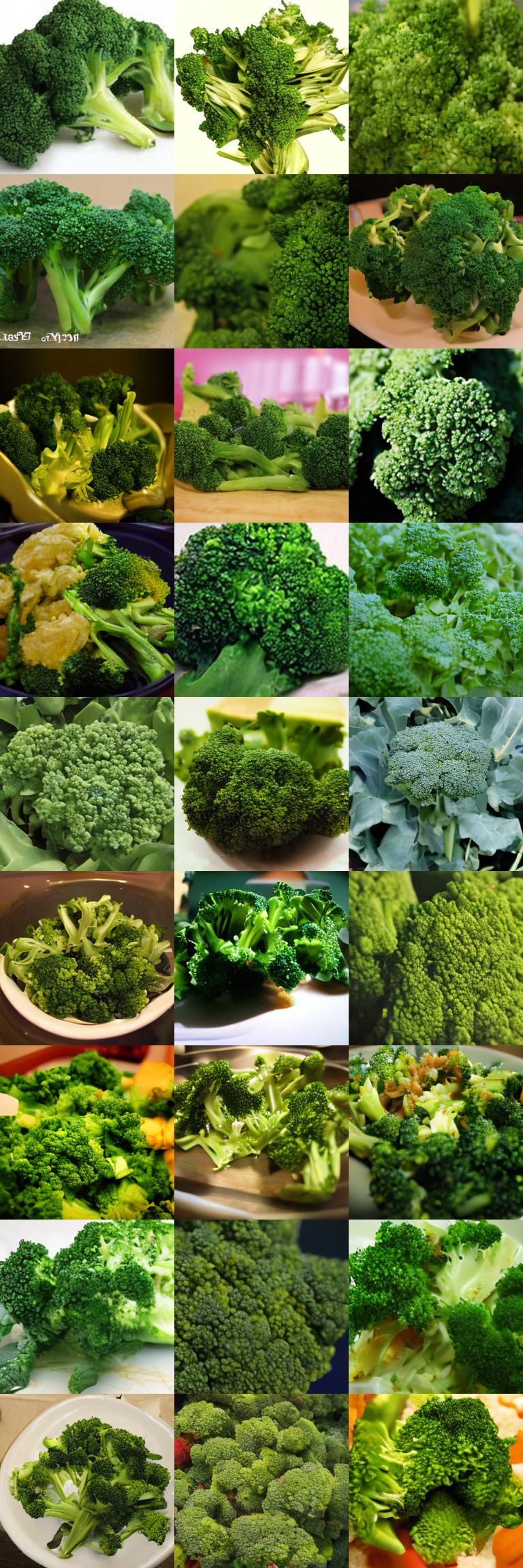}
}
~
\subfloat[Class: 963 - `Pizza']{
    \includegraphics[width=0.48\linewidth]{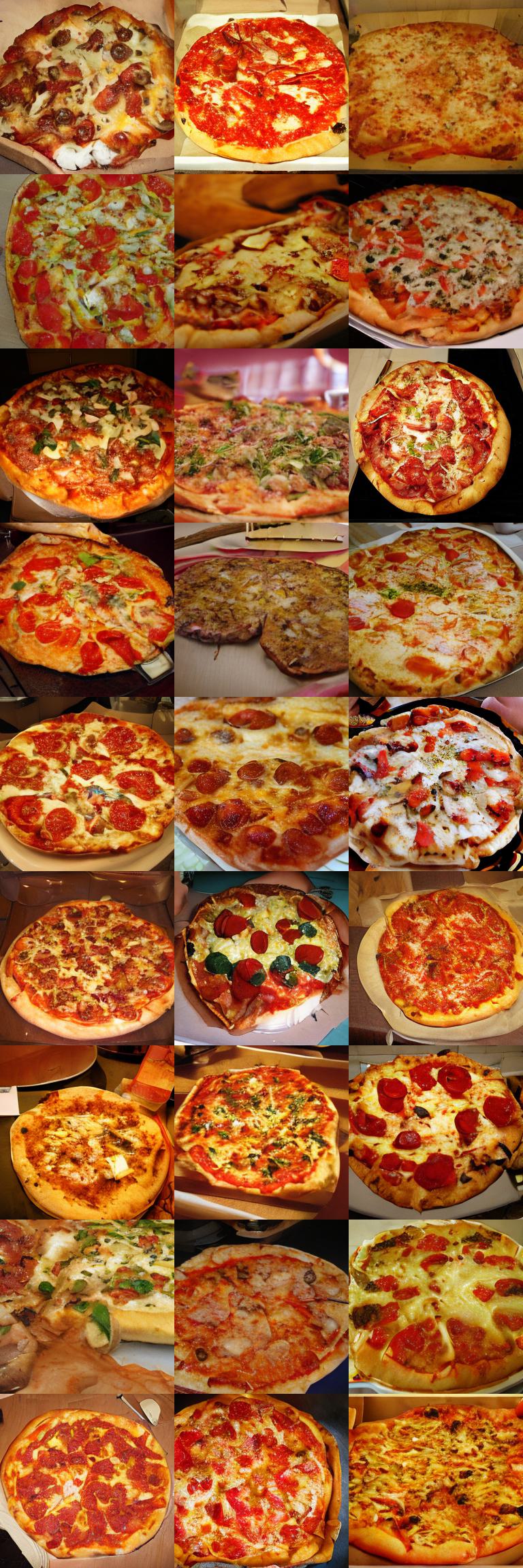}
}
\caption{Non-curated samples of our DiT-B/2 variant on ImageNet ($ \text{cfg}=2.0$).}
    \label{fig:imnet_cfg2.0c}
\vspace{-2mm}
\end{figure}

\begin{figure}[t]
\centering
\subfloat[Class: 975 - `Lakeside']{
    \includegraphics[width=0.48\linewidth]{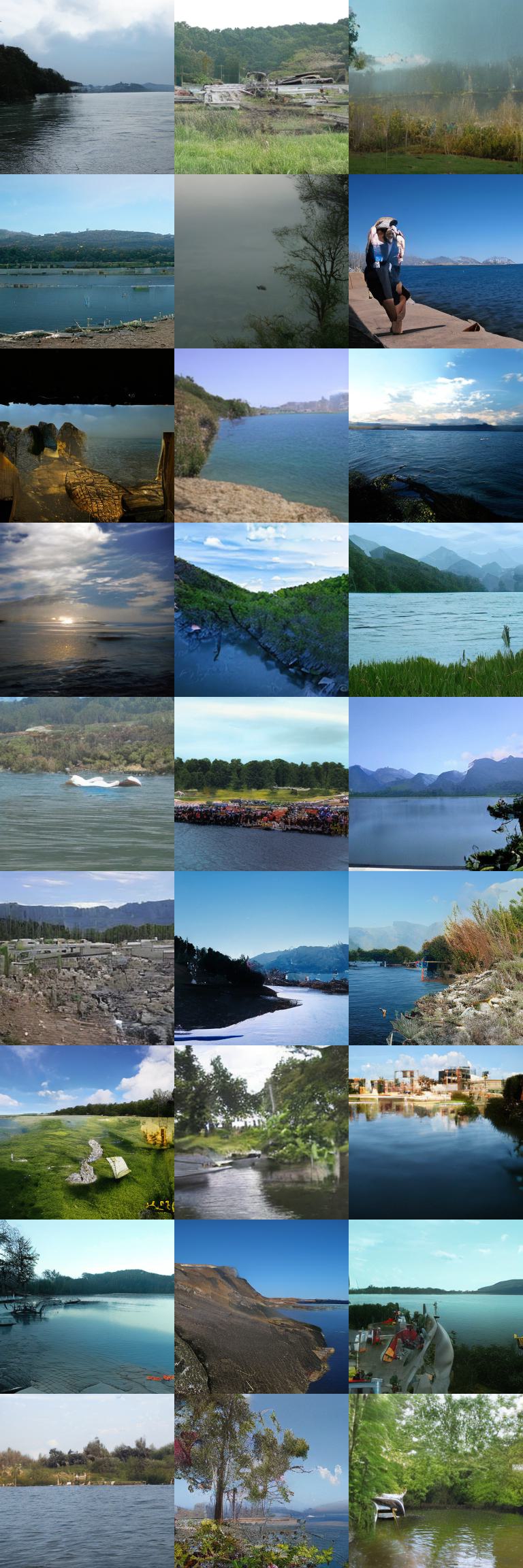}
}
~
\subfloat[Class: 984 - `Rapeseed']{
    \includegraphics[width=0.48\linewidth]{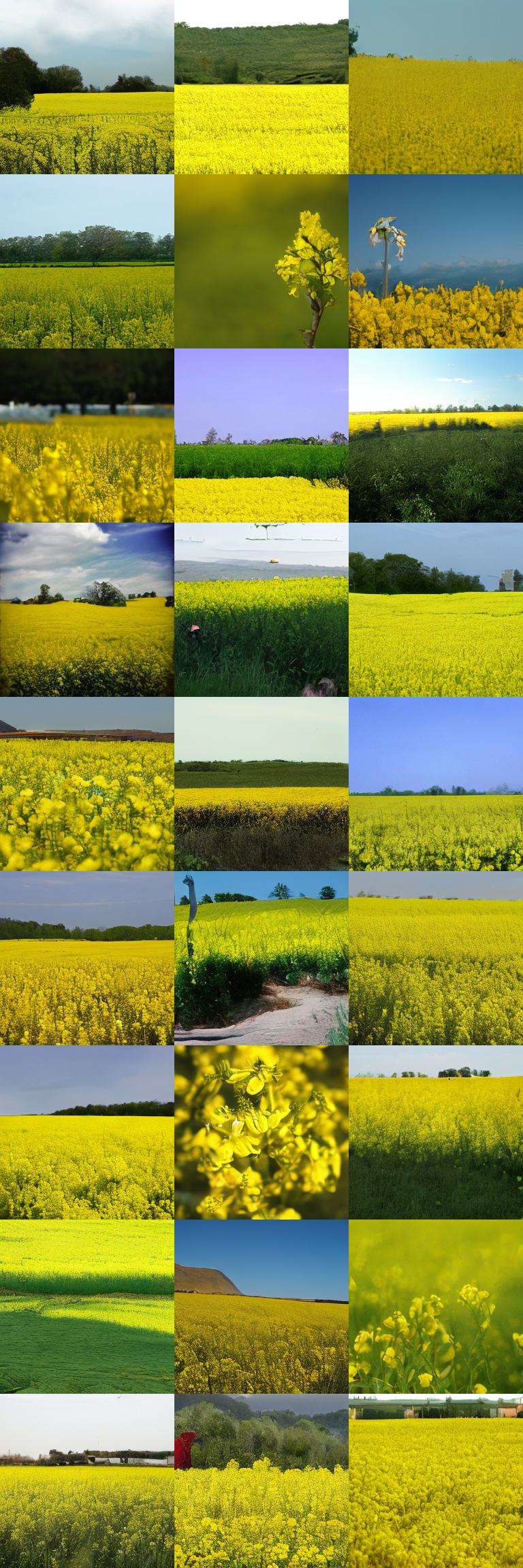}
}
\caption{Non-curated samples of our DiT-B/2 variant on ImageNet ($ \text{cfg}=1.5$).}
    \label{fig:imnet_cfg1.5}
\vspace{-2mm}
\end{figure}

\begin{figure}[t]
\centering
\subfloat[Class: 817 - `Sports car']{
    \includegraphics[width=0.48\linewidth]{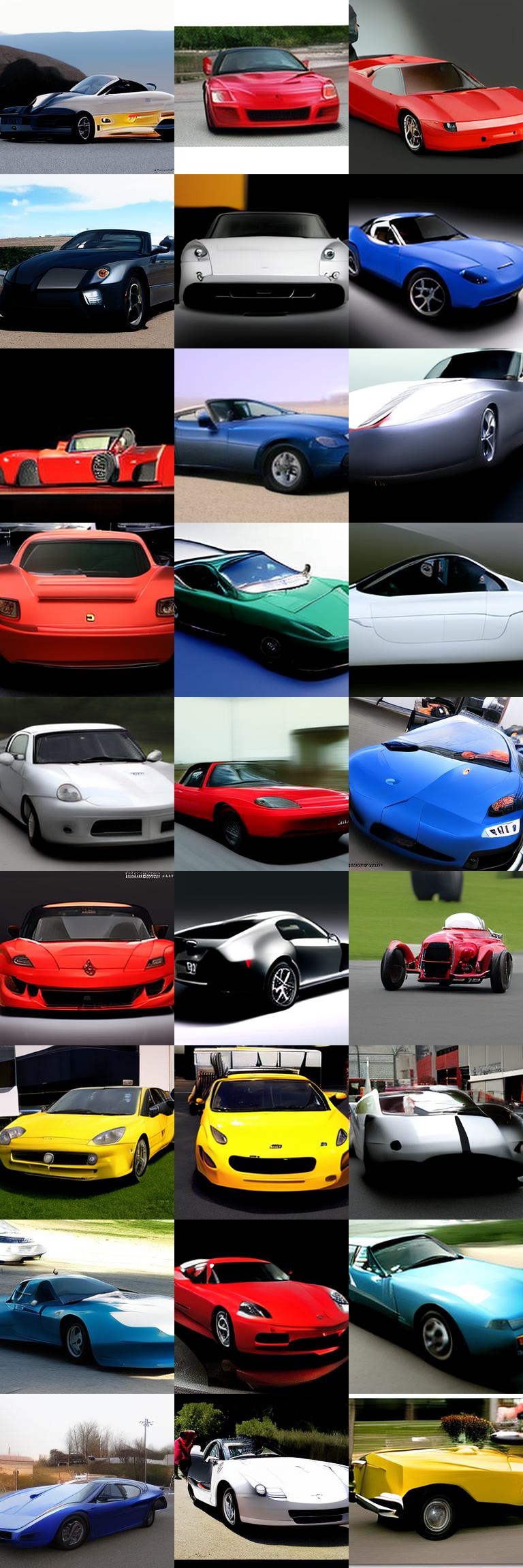}
}
~
\subfloat[Class: 825 - `Stone wall']{
    \includegraphics[width=0.48\linewidth]{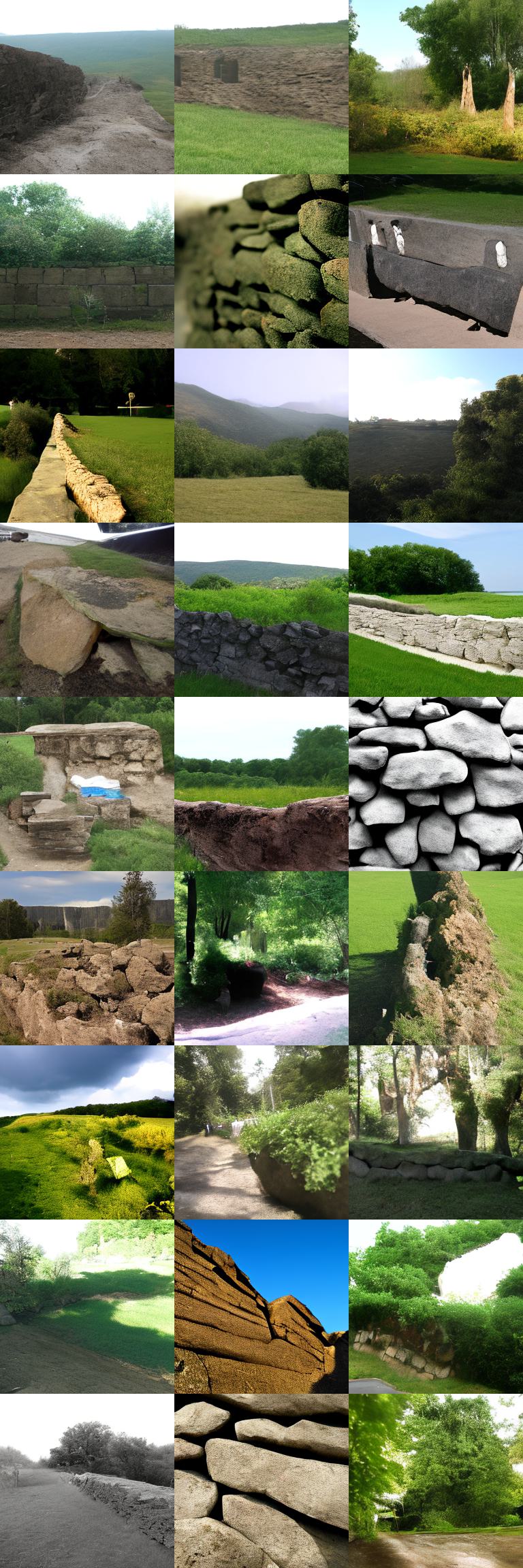}
}
\caption{Non-curated samples of our ADM variant on ImageNet ($ \text{cfg}=1.5$).}
    \label{fig:imnet_adm_cfg1.5}
\vspace{-2mm}
\end{figure}

% \subsection{Comparison of training convergence}

\subsection{More qualitative results}
For better illustration, additional visual examples are given for CelebA-HQ 256 in \cref{fig:more_celeb_samples}, FFHQ in \cref{fig:ffhq_samples}; LSUN Church in \cref{fig:church_samples}; LSUN Bedroom in \cref{fig:bed_samples}; CelebA-HQ 512 in \cref{fig:celeb512_samples}, and ImageNet in \cref{fig:imnet_cfg4.0_00,fig:imnet_cfg4.0_01,fig:imnet_cfg4.0_02,fig:imnet_cfg4.0_03,fig:imnet_cfg4.0_04}, \cref{fig:imnet_cfg2.0a,fig:imnet_cfg2.0b,fig:imnet_cfg2.0c}, and \cref{fig:imnet_cfg1.5,fig:imnet_adm_cfg1.5} with guidance scale of 4.0, 2.0, and 1.5, respectively.

\section{Implementation details}
\minisection{Network configuration.} 
We adopt two types of network architectures, namely UNet (ADM) and transformer network (DiT), for most experiments. \cref{tab:adm_config} shows detailed configurations of the ADM network on different datasets. For DiT, we adopt the DiT-L/2 variant for the unconditional generation with shape $256\times256$, and DiT-B/2 for class-conditional generation on ImageNet. Additionally, the size of DiT network is described at \cref{tab:network_size}. 

\begin{table}[t]
    \centering
    \caption{Size of DiT Network.}
    \begin{tabular}{l|c|c|c}
        \toprule
        Network & Image size & Params (M) & FLOPs (G) \\
        \toprule
        DiT-L/2 & 256 & 457 & 80.74 \\
        DiT-B/2 & 256 & 130 & 23.01 \\
        \bottomrule
    \end{tabular}
    \label{tab:network_size}
\end{table}

\begin{table*}[t]
\centering
\caption{ADM configurations.}
\begin{tabular}{l|c|c|c|c}
\toprule
                                 & \makecell{CelebA, FFHQ \\ \& Bed(256)} &  {Church(256)} &{CelebA(512)} &
                                 Imnet(256) \\ % &
                                 % {CelebA(1024)} \\
\midrule
\# of ResNet blocks per scale     & 2        & 2                & 2  & 2 \\% & 2  \\
Base channels                    & 256      & 256                 & 256 & 256 \\ % & 256     \\
Channel multiplier per scale & 1,2,3,4      & 1,2,3,4 & 1,2,2,2,4 & 1,2,3,4 \\ % & 1,1,2,2,4,4 \\
Attention resolutions              & 16, 8, 4        & 16,8                & 16, 8 & 16,8,4 \\ %  & 16, 8    \\
Channel multiplier for embeddings       & 4      & 4       & 4   & 4 \\ %  & 4         \\
Label dimensions & 0 & 0 & 0 & 1000 \\
\midrule
Params (M) & 368 & 356 & 352 & 369\\ 
FLOPs (G) & 28.9 & 28.9 & 94.2 & 28.9 \\

\bottomrule
\end{tabular}

\label{tab:adm_config}
\end{table*}

\minisection{Training hyper-params.} In \cref{tab:hyperparams_adm} and \cref{tab:hyperparams_dit}, we provide training hyperparameters for unconditional image generation on ADM and DiT networks, respectively. In both tables, each setting also shows the estimated training days.

\begin{table}[!ht]
    \centering
    \caption{Hyper-parameters of ADM network.}
    \begin{tabular}{l|c|c|c|c|c}
        \toprule
                                            & CelebA 256  & FFHQ    & CelebA 512 & Church \& Bed & IMNET \\
        \midrule
        $\text{lr}$                            & $5\text{e-5}$  & $2\text{e-}5$  & $5\text{e-}5$      & $5\text{e-}5$ & 1\text{e-}4   \\
        Adam optimizer ($\beta_1$ \& $\beta_2$) & 0.9, 0.999  & 0.9, 0.999  & 0.9, 0.999 & 0.9, 0.999 & 0.9, 0.999 \\
        % EMA                                 & 0.9999    & 0.9999    & 0.999       & 0.999       \\
        Batch size                          & 112       & 128       & 64        & 128   & 96   \\
        \# of epochs                        & 600      & 500       & 500     & 500  &  1200  \\
        \# of GPUs                          & 2         & 1         &  2            & 1    & 8  \\
        \# of training days &1.3  & 4.4 &5.9  & 6.9 \& 7.3 & 21.6 \\
        \bottomrule
    \end{tabular}
    \label{tab:hyperparams_adm}
\end{table}

\begin{table}[!ht]
    \centering
    \caption{Hyper-parameters of DiT network.}
    \begin{tabular}{l|c|c|c|c}
        \toprule
                                            & CelebA 256  & FFHQ & Church \& Bed & IMNET \\
        \midrule
        Model & DiT-L/2 & DiT-L/2 & DiT-L/2 & DiT-B/2 \\
         $\text{lr}$                            & $2\text{e-4}$  & $2\text{e-}4$ & $1\text{e-}4$ & $1\text{e-}4$ \\
        AdamW optimizer ($\beta_1$ \& $\beta_2$) & 0.9, 0.999  & 0.9, 0.999 & 0.9, 0.999 & 0.9, 0.999 \\
        % EMA                                 & 0.9999    & 0.9999    & 0.999       & 0.999       \\
        Batch size                          & 32       & 32       &  96 \& 32 & 160     \\
        \# of epochs                        & 500      & 500       & 500 & 900  \\
        \# of GPUs                          & 1         & 1       & 2  \& 1  & 8   \\
        \# of training days & 5.8 & 5.5 & 6.1 \& 12.5 & 12.0 \\
        \bottomrule
    \end{tabular}
    \label{tab:hyperparams_dit}
\end{table}

\end{document}